\newtheorem{defn}{Definition}
\newtheorem{rem}[defn]{Remark}
\newtheorem{lem}[defn]{Lemma}
\newtheorem{assum}[defn]{Assumption}
\newtheorem{thm}[defn]{Theorem}
\providecommand{\R}{\ensuremath \mathbb{R}}
\providecommand{\T}{\ensuremath T}
\providecommand{\tfin}{t_\regtext{f}}
\renewcommand{\P}{\ensuremath \mathcal{P}}
\newcommand{\Lfi}{\mathcal{L}_{f_i}}
\newcommand{\Lgi}{\mathcal{L}_{g_i}}
\newcommand{\norm}[1]{\left\Vert#1\right\Vert}
\newcommand{\defemph}[1]{\emph{#1}}
\newcommand{\vep}{\varepsilon}
\newcommand{\inv}{^{-1}}
\newcommand{\Z}{\mathbb{Z}}
\newcommand{\bd}[1]{\partial #1}
\newcommand{\ceil}[1]{\left\lceil#1\right\rceil}
\newcommand{\regtext}[1]{\mathrm{\textnormal{#1}}}
\newcommand{\hi}{_\regtext{hi}}
\newcommand{\hio}{_{\regtext{hi},0}}
\newcommand{\hij}{_{\regtext{hi},j}}
\newcommand{\xhij}{x_{\regtext{hi},j}}
\newcommand{\xhijp}{x_{\regtext{hi},j+1}}
\newcommand{\NAFexact}{{\varphi}}
\newcommand{\NAF}{{\tilde{\NAFexact}}}
\newcommand{\FRS}{F}
\newcommand{\plan}{_\regtext{plan}}
\newcommand{\sense}{_\regtext{sense}}
\newcommand{\brk}{_\regtext{brake}}
\newcommand{\brko}{_{\regtext{brake},0}}
\newcommand{\move}{_\regtext{move}}
\newcommand{\moveo}{_{\regtext{move},0}}
\newcommand{\des}{_\regtext{des}}
\newcommand{\disc}{_\regtext{disc}}
\newcommand{\discmax}{_\regtext{disc,max}}
\newcommand{\discfn}{\regtext{\texttt{disc}}}
\newcommand{\sample}{\regtext{\texttt{sample}}}
\newcommand{\obs}{_\regtext{obs}}
\newcommand{\obsmax}{_\regtext{obs,max}}
\newcommand{\stp}{_\regtext{stop}}
\newcommand{\stpo}{_{\regtext{stop},0}}
\newcommand{\naf}{_\regtext{NAF}}
\newcommand{\pred}{_\regtext{pred}}
\newcommand{\rel}{_\regtext{rel}}
\newcommand{\idx}{\regtext{proj}_X}
\newcommand{\vmax}{{v_\regtext{max}}}
\newcommand{\bt}{\beta}
\newcommand{\dl}{\delta}
\newcommand{\om}{\omega}
\definecolor{Gray}{gray}{0.9}
\newcolumntype{g}{>{\columncolor{Gray}}c}
\newcommand{\comp}{^{\regtext{C}}}
\title{\LARGE \bf Towards Provably Not-at-Fault Control of Autonomous Robots in Arbitrary Dynamic Environments}
\author{Sean Vaskov*$^1$, Shreyas Kousik*$^1$, Hannah Larson$^1$, Fan Bu$^1$, James Ward$^2$,\\ Stewart Worrall$^2$, Matthew Johnson-Roberson$^3$, Ram Vasudevan$^1$
\thanks{This work is supported by the Ford Motor Company via the Ford-UM Alliance under award N022977, and the Office of Naval Research under award number N00014-18-1-2575.}
\thanks{* These authors contributed equally to this work.}
\thanks{$^{1}$Mechanical Engineering, University of Michigan, Ann Arbor, MI {\tt\small <skvaskov,skousik,hmlarson,fanbu,ramv>@umich.edu}}%
\thanks{$^{2}$Australian Centre for Field Robotics, University of Sydney, New South Wales, Australia {\tt\small <j.ward,s.worrall>@acfr.usyd.edu.au}}%
\thanks{$^{3}$Naval Architecture and Marine Engineering, University of Michigan, Ann Arbor, MI {\tt\small <mattjr>@umich.edu}}%
}
\begin{document}

\maketitle
\thispagestyle{empty}
\pagestyle{plain}

\begin{abstract}
As autonomous robots increasingly become part of daily life, they will often encounter dynamic environments while only having limited information about their surroundings.
Unfortunately, due to the possible presence of malicious dynamic actors, it is infeasible to develop an algorithm that can guarantee collision-free operation.
Instead, one can attempt to design a control technique that guarantees the robot is not-at-fault in any collision.
In the literature, making such guarantees in real time has been restricted to static environments or specific dynamic models.
To ensure not-at-fault behavior, a robot must first correctly sense and predict the world around it within some sufficiently large sensor horizon (the prediction problem), then correctly control relative to the predictions (the control problem).
This paper addresses the control problem by proposing Reachability-based Trajectory Design for Dynamic environments (RTD-D), which guarantees that a robot with an arbitrary nonlinear dynamic model correctly responds to predictions in arbitrary dynamic environments.
RTD-D first computes a Forward Reachable Set (FRS) offline of the robot tracking parameterized desired trajectories that include fail-safe maneuvers.
Then, for online receding-horizon planning, the method provides a way to discretize predictions of an arbitrary dynamic environment to enable real-time collision checking.
The FRS is used to map these discretized predictions to trajectories that the robot can track while provably not-at-fault.
One such trajectory is chosen at each iteration, or the robot executes the fail-safe maneuver from its previous trajectory which is guaranteed to be not at fault.
RTD-D is shown to produce not-at-fault behavior over thousands of simulations and several real-world hardware demonstrations on two robots: a Segway, and a small electric vehicle.
\end{abstract}

\section{Introduction}\label{sec:introduction}

Autonomous ground robots, such as autonomous cars, have the potential to increase people's mobility and the accessibility of services.
This requires them to operate in environments alongside humans or other surrounding actors that may be moving.
Since a robot's sensors can only provide information in a finite neighborhood around it, robots typically operate using a receding-horizon strategy, in which new control inputs are computed as the previous ones are executed.
Most autonomous mobile robots generate these control inputs using a three-level hierarchy to enable real-time performance \cite{katrakazas2015_motionplanning,mcnaughton_thesis_2011,boss2008urbanchallenge}.
At the top of the hierarchy, a high-level planner generates a coarse task description, such as GPS waypoints for an autonomous car to follow.
A mid-level planner then generates a reference trajectory that attempts to execute the high-level task.
Finally, a low-level controller (e.g., a proportional or model-predictive controller) attempts to track the reference trajectory by actuating the robot.
To operate in real time, the high-level planner typically does not consider the robot's dynamics, and the low-level controller typically does not consider the robot's surroundings.
Therefore, the mid-level planner must generate a reference trajectory that, when tracked by the low-level controller, causes the robot to avoid obstacles, making the mid-level controller responsible for ensuring safety.

\begin{figure}[t]
    \centering
    \includegraphics[width=0.98\columnwidth]{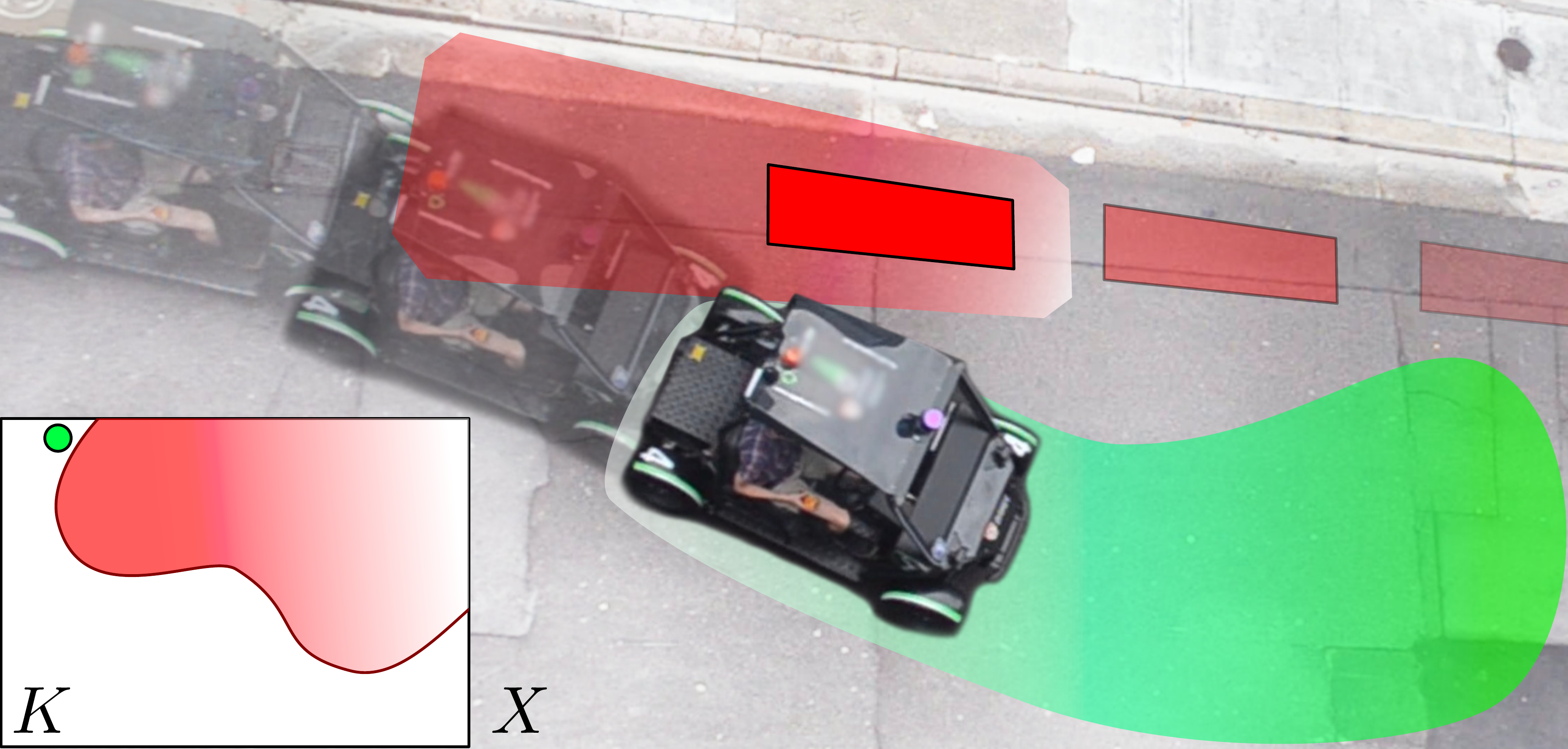}
    \caption{The proposed method planning a guaranteed not-at-fault trajectory for the EV robot (moving from left to right) around a dynamic obstacle (in red, moving from right to left) in the plane $X$; opacity increases with time.
    At the last depicted time instance, the obstacle's predicted motion fades from white to red, and the forward reachable set of the EV fades from white to green.
    In the trajectory parameter space $K$, the planned trajectory is a green point lying outside the parameters for which the robot could be at-fault in a collision.
    At runtime, the proposed method conservatively approximates the set of not-at-fault trajectories by identifying the set of trajectories that would intersect with a discretized representation of the obstacle and its prediction. 
    This paper proves that this obstacle representation, which enables real-time planning performance, is sufficient to ensure collision free behavior.   
    Videos are available of the EV (\texttt{\url{www.roahmlab.com/ev\_dyn\_obs\_demo}}) and Segway (\texttt{\url{www.roahmlab.com/segway\_dyn\_obs\_demo}}) robots.}
    \label{fig:ev_birdseye_timelapse}
    \vspace*{-0.5cm}
\end{figure}

Unfortunately, guaranteeing safe operation in arbitrary scenarios is intractable.
Consider a vehicle on a highway, surrounded by other cars driving at the same speed.
In this instance, any surrounding vehicle could act maliciously to cause a collision; nevertheless, it is still possible to assign fault \cite{shwartz2017_mobileye_safety,yuxiao2018_polar}.
As a result, safety is more appropriately defined as the robot being \defemph{not-at-fault} for a collision.

As depicted in Figure \ref{fig:ev_birdseye_timelapse}, this paper presents a mid-level planner that generates provably not-at-fault trajectories in real time.
Note, this work is not concerned with how to sense or predict obstacles in the robot's surroundings.
These problems, difficult in their own right, are the subject of ongoing research \cite{hoj2017_ped_pred,bajcsy2018_multi_robot_collision_avoidance,liu2017_dynamic_env_kinematic}.
Note that predictions can be made more conservative by increasing the uncertainty associated with observations, at the expense of reducing free space for planning.
However, to the best of our knowledge, even when obstacles are sensed and predicted conservatively, no numerical method has yet been shown to guarantee not-at-fault, real-time creation of reference trajectories with respect to such information.

\subsection{Related Work}

To guarantee that the trajectories they design are not-at-fault, mid-level planners must perform both \emph{planning} (the creation of a trajectory for the robot to track) and \emph{validation} (checking that the robot satisfies environment and state constraints).
If the planner fails validation in one iteration, the robot can attempt to execute a not-at-fault fail-safe maneuver created by the planner in a previous iteration.
Depending upon how they plan and validate, existing mid-level planners can be broadly divided into three categories: check, correct, or select.

``Check'' methods use precomputed reference trajectories that include fail-safe maneuvers and are checked for collisions online.
For example, \citet{mcnaughton_thesis_2011} use a state lattice to generate trajectories, and check collisions with respect to an occupation grid at a discrete number of points; however, this check does not guarantee that the whole trajectory is not-at-fault.
Zonotope reachability methods, on the other hand, check whether an entire trajectory intersects with any obstacles in the environment \cite{althoff2014online,liu2017_dynamic_env_kinematic}.
This requires a reachability computation for a high-dimensional system at run time, which can be challenging to perform in real-time in arbitrary scenarios.

``Correct'' approaches generate a kinematically-feasible reference trajectory, then modify the control inputs to ensure the robot is not-at-fault when tracking the reference.
For instance, one can compute a lookup table of control inputs that combat tracking error with Hamilton-Jacobi (HJ) reachability analysis \cite{herbert2017fastrack,bajcsy2018_multi_robot_collision_avoidance}; however, since the level set method \cite{mitchell_toolboxLS} used to measure tracking error does not necessarily generate an outer approximation to the reachable set, it is unclear how to certify that this approach correctly responds to tracking error that may lead to a collision \cite[Section III-A]{mitchell2005time}.
Another ``correct'' approach computes a Control Barrier Function (CBF) that is similar to a Lyapunov function over a continuous control input space; this has been applied successfully to active cruise control and lane keeping \cite{xu2016correctness}, and to low-speed robots that can treat dynamics as a disturbance in an off-line fashion \cite{yuxiao2018_polar}.
However, it is unclear how to extend this approach to fast nonlinear systems in arbitrary environments under real-time constraints \cite{borrman2015_aaron_ames_swarm_cbf}.

``Select'' approaches create a set of reference trajectories offline, and select one that is not-at-fault at each planning iteration.
For instance, \citet{majumdar2016funnel} precompute a finite set of ``funnels,'' which are volumes in state space that contain reference trajectories and associated tracking error, using Sums-of-Squares (SOS) programming; at runtime, they propose to optimize only over those funnels that do not intersect with sensed obstacles.
However, the Bullet Graphics Engine \cite{coumans2013bullet}, which they apply to check for collisions between the funnel and obstacle at run-time, is unable to certify that it detects a collision if one exists \cite{sagardia2014new}.
To avoid using a finite set of reference trajectories, one can precompute a Forward Reachable Set (FRS) over a continuous, parameterized trajectory space \cite{kousik2017safe,kousik2018_RTD_ijrr}.
For online planning, \citet{kousik2018_RTD_ijrr} prescribe a numerical method to certify that a trajectory is collision free by verifying that the FRS of that trajectory does not intersect with a discretized obstacle representation.
Unfortunately, ``select'' methods have only been developed for static environments; furthermore, they implicitly require that a fail-safe maneuver can be performed.

\subsection{Contribution}

This paper presents a novel ``select'' method for mid-level planning called Reachability-based Trajectory Design for Dynamic environments (RTD-D).
To the best of our knowledge, this is the first real-time, mid-level planner that is certified to generate not-at-fault, dynamically-feasible trajectories in arbitrary dynamic environments.
The contributions of this paper are four-fold.
First, we formulate a minimum sensor horizon requirement for planning in dynamic environments to ensure not-fault behavior (Section \ref{sec:dynamic_environments}).
Second, we formulate an offline FRS computation that explicitly includes a fail-safe maneuver (Section \ref{sec:reachability}).
Third, we prescribe a method for discretizing obstacle predictions in space and time that enables real-time operation while guaranteeing collision-free behavior (Section \ref{sec:not-at-fault_plans}).
Fourth, we confirm that RTD-D is provably not-at-fault over thousands of simulations and compare its performance to a state lattice planner; and we show that RTD-D is effective in the real-world on two hardware platforms: a Segway and an Electric Vehicle (EV), shown in Figure \ref{fig:ev_and_segway} (Section \ref{sec:demos}).
The rest of the paper is organized as follows: the end of this section presents notation; Section \ref{sec:models} introduces the dynamic models used for planning; and Section \ref{sec:conclusion} provides concluding remarks.

\subsection{Notation}\label{subsec:notation}

The complement of a set $A$ is $A\comp$.
The power set of $A$ is $\P(A)$.
The set of continuous (resp. $n$-times differentiable), scalar-valued functions with domain $A$ is $C(A)$ (resp. $C^n(A)$).
The support of a function is $\regtext{supp}(\cdot)$.
The operator $\ceil{\cdot}: \R \to \Z$ rounds up to the nearest integer.
The Hadamard (elementwise) product is denoted by $\circ$.
The set $L_d(\T)$ is the space of absolutely integrable functions from the set $\T$ to $[-1,1]^2$.
\section{Dynamic Models}\label{sec:models}

This paper proposes a receding-horizon planning algorithm that constructs a new trajectory to track while following the trajectory designed during the previous planning iteration.
To construct a new trajectory in each iteration, the planner must be able to estimate the future position of the robot while it follows the previously-constructed trajectory.
This is accomplished using a high-fidelity model.
Since this model may be complex, it may be prohibitive to use in real-time optimization for trajectory design.
As a result, the planner requires a simplified description of the robot.
We refer to this model as the trajectory-producing model. 
This section presents this pair of models and explains how they are used online.

\subsection{High-Fidelity Model}

We estimate the future position of the robot using a \defemph{high-fidelity model} $f\hi: \T\times X\hi\times U \to \R^{n\hi}$ for which
\begin{align}
    \dot{x}\hi(t) = f\hi(t,x\hi(t),u(t)),\label{eq:high-fidelity_model}
\end{align}
where time $t$ is in the \defemph{planning time horizon} $\T = [t_0,\tfin]$.
The state $x\hi$ is in the space $X\hi \subset \R^{n\hi}$, and inputs are drawn from $U \subset \R^{n_U}$.
Since planning is done in a receding-horizon fashion, without loss of generality (WLOG), let each planned trajectory (i.e., each planning iteration) begin at $t_0 = 0$.
In addition, we assume that robot's speed is bounded:
\begin{assum}\label{ass:speed}
The robot has a maximum speed $\vmax$.
\end{assum}

\noindent We assume that the difference between the true state of the robot and the future state estimate under \eqref{eq:high-fidelity_model} beginning from a measured initial state satisfies the following assumption:
\begin{assum}\label{ass:state_estimation_within_epsilon}
Suppose for some $t \in \T$, $x\hi(t)$ is the future state estimate computed by forward integrating the high-fidelity model \eqref{eq:high-fidelity_model} from a measured initial condition. 
Then, the absolute difference between $x\hi(t)$ and the true state of the robot in each coordinate is bounded by $\vep_i > 0$ for each $i \in \{ 1,\cdots,n\hi\}$ and for all $t \in \T$.
\end{assum}

\noindent Since our focus is on planning for ground vehicles, we make the following assumption.

\begin{assum}\label{ass:planar_rigid_body}
The robot operates in the plane.
Define $X \subset \R^2$ as the \defemph{spatial coordinates} of the robot's body such that $X \subset X\hi$.
We denote these coordinates as $x = (x_1, x_2) \in X$.
The operator $\idx: X\hi \to X$ projects points in $X\hi$ to $X$ via the identity relation.
The robot is a rigid body that lies in the compact, convex set $X_0 \subset X$ of initial conditions at $t = 0$; we call $X_0$ the \defemph{robot footprint}.
\end{assum}

\noindent The following definition summarizes prediction error and is used to buffer obstacles as described in Section \ref{sec:dynamic_environments}.

\begin{defn}\label{def:spatial_est_error}
The robot's \defemph{maximum spatial estimation error} is $\vep = (\vep_1^2 + \vep_2^2)^{1/2}$ where $\vep_1, \vep_2$ are the error in $x_1, x_2$ as in Assumption \ref{ass:state_estimation_within_epsilon}, 
\end{defn}

\subsection{Desired Trajectories}\label{subsec:desired_trajectories}

Since we focus on real-time planning, we make the following assumption.
\begin{assum}\label{ass:tau_plan_and_braking}
During each planning iteration, the robot has $\tau\plan > 0$ amount of time to pick a new input.
If the robot cannot find a new input in a planning iteration, it begins a ``fail-safe'' maneuver.
In this work, the fail-safe maneuver is braking to a stop; the robot stays stopped until a new input is found.
\end{assum}

We use the following trajectory-producing model with dynamics $f: \T\times X\times K \to \R^2$ to enable real-time planning.
\begin{defn}\label{def:time_phases}
Let $\T = \T\move\cup\T\brk(k)\cup\T\stp(k)$.
We call $\T\move := [0,\tau\plan]$ the \defemph{moving phase}; $\T\brk(k) := [\tau\plan, \tau\plan + \tau\brk(k)]$ the \defemph{braking phase}, and $\T\stp(k) := [\tau\plan + \tau\brk(k), t_f]$ the \defemph{stopped phase}.
The function $\tau\brk: K \to \R_{\geq 0}$ is the \defemph{braking time} of each desired trajectory.

The \defemph{trajectory-producing model} is then written
\begin{align}\label{eq:traj-prod_model_time_phases}
\dot{x}(t) = f(t,x,k) &= \begin{cases}
    f\move(t,x,k),~t \in \T\plan \\
    f\brk(t,x,k),~t \in \T\brk \\
    f\stp(t,x,k),~t\in \T\stp.
\end{cases}
\end{align}
\end{defn}
\noindent Note this model is lower-dimensional than the high-fidelity model and generates \defemph{desired trajectories} in $X$.
The space $K \subset \R^{n_K}$ contains \defemph{trajectory parameters} that determine the ``shape'' of the desired trajectories.
We call these desired trajectories instead of reference trajectories to emphasize that the robot cannot track them perfectly.

Given a desired trajectory parameterized by $k \in K$, the robot uses a low-level controller $u_k: \T \times X\hi\times K \to U$ to track it.
Note that $u_k$ can be any sort of feedback controller, but typically cannot perfectly track the desired trajectories.
We say the robot ``tracks $k$'' to mean the robot tracks a desired trajectory parameterized by $k$.
When the robot tracks $k$, we predict its future state by applying $u_k$ as the input to the high-fidelity model \eqref{eq:high-fidelity_model}.

At the beginning of each planning iteration, time is reset to $t = 0$ and the origin of $X$ is translated and rotated to the robot's future pose, estimated as in Assumption \ref{ass:state_estimation_within_epsilon}.
During each planning iteration, we create a new desired trajectory for the next planning iteration by choosing $k \in K$ while tracking the previously-computed $k$.
Since $k$ does not change during each planning iteration, $\dot{k}(t) = 0$ for all $t \in \T$.

To simplify exposition, we do not show dependence on $k$ for $T\brk$ and $T\stp$ hereafter.
Note that $f\stp(t,x,k) = 0$ usually; we write $f\stp$ to illustrate that coming to a stop (i.e., completing the fail-safe maneuver) is part of every desired trajectory.
Since the robot cannot perfectly track trajectories produced by $f$, the stopped phase is included to ensure that the robot under $u_k$ comes to a complete stop.
Section \ref{sec:demos} describes an implementation of \eqref{eq:traj-prod_model_time_phases}.

\subsection{Tracking Error}

We can bound the spatial difference between the robot and the desired trajectory at any time; we call this the \defemph{tracking error}.
To construct this bound, we assume the following:

\begin{assum}\label{ass:sets_are_compact}
The spaces $X\hi$, $U$, and $K$ are compact.
The dynamics \eqref{eq:high-fidelity_model} is Lipschitz continuous in each of its arguments. 
\end{assum}

\begin{assum}\label{ass:tracking_error}
Let $i \in \{\mathrm{move, brake, stop}\}$ index the phases of $\T$ and let $j \in \{1, 2\}$ index the states in $X$.
Then, for each phase and state pair $(i,j)$, there exists a function $g_{i,j}: \T \times K \to \R_{\geq 0}$ such that $\mathrm{supp}\left(g_{i,j}\right) \subseteq \T_i\times K$ and for any $t \in T$ and $k \in K$ the following inequality holds:
\begin{align}\label{eq:g_tracking_error_defn}
  \max_{x\hio \in X\hio} \big|x\hij(t;x\hio,k) - x_j(t;x_0,k)\big| \leq \int_0^t \max_i \{ g_{i,j}(\tau,k) \}d\tau,
\end{align}
where $X\hio = \{x\hi \in X\hi~|~\idx(x\hi) \in X_0\}$,
$x\hij(t;x\hio,k)$ is the solution to \eqref{eq:high-fidelity_model} in state $j$ at time $t$ beginning from $x\hio$ under a control input $u_k$, and $x_j(t;x_0,k)$ is the solution to \eqref{eq:traj-prod_model_time_phases} in state $j$ at time $t$ beginning from $x(0) = \idx(x\hio)$ under a trajectory parameter $k$.
\end{assum}

\noindent We combine these $g_{i,j}$ to create the \defemph{tracking error function} $g: \T\times K \to (\R_{\geq 0})^2$, written as $g = (g_1,g_2)$, such that $g_j(t,k) = \max_i \{g_{i,j}(t,k)\}.$
As is proven in \cite[Lemma 12]{kousik2018_RTD_ijrr}, the tracking error function lets us ``match'' the spatial component of the high-fidelity model's trajectories using the trajectory-producing model.
\begin{lem}\label{lem:traj-tracking_model_matches_hi-fid_model}
For each $x\hio \in \{x\hi \in X\hi~|~\idx(x\hi) \in X_0\}$ and $k \in K$, there exists a $d \in L_d(\T)$ such that $\idx(x\hi(t;x\hio,k)) = \idx(x\hio) + \int_0^t\left(f(\tau,x(\tau;\idx(x\hio),k),k) + g(\tau,k)\circ d(\tau)\right)d\tau$
for each $t \in T$, where $x\hi(t;x\hio,k)$ is the solution to \eqref{eq:high-fidelity_model} at time $t$ beginning from $x\hio$ under a control input $u_k$ and $x(t;\idx(x\hio),k)$ is the solution to \eqref{eq:traj-prod_model_time_phases} at time $t$ beginning from $\idx(x\hio)$ under a trajectory parameter $k$.
\end{lem}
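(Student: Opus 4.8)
The plan is to prove the identity one spatial coordinate at a time, so fix $k \in K$ and $x\hio$, fix $j \in \{1,2\}$, and abbreviate $x\hij(t) := x\hij(t;x\hio,k)$ and $x_j(t) := x_j(t;\idx(x\hio),k)$. By Assumption \ref{ass:sets_are_compact} the high-fidelity dynamics are Lipschitz on compact sets, so $t \mapsto x\hij(t)$ is absolutely continuous; the trajectory-producing solution $t \mapsto x_j(t)$ is likewise absolutely continuous and satisfies $\dot x_j(\tau) = f_j(\tau,x(\tau),k)$ for almost every $\tau$. Since both start from the common spatial point $\idx(x\hio)$, the tracking-error coordinate $h_j(t) := x\hij(t) - x_j(t)$ is absolutely continuous with $h_j(0) = 0$, and the claimed identity is equivalent to exhibiting $d_j:\T\to[-1,1]$ with $h_j(t) = \int_0^t g_j(\tau,k)\,d_j(\tau)\,d\tau$ for all $t$, where $g_j(\tau,k) = \max_i g_{i,j}(\tau,k)$. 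Differentiating this target shows it forces $g_j(\tau,k)\,d_j(\tau) = \dot h_j(\tau)$ for almost every $\tau$, which both pins down the candidate and reveals precisely what must be checked.

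Accordingly, I would define the selection $d_j(\tau) := \dot h_j(\tau)/g_j(\tau,k)$ on $\{\tau : g_j(\tau,k) > 0\}$ and $d_j(\tau) := 0$ elsewhere. Measurability of $d_j$ is immediate since $\dot h_j$ and $g_j$ are measurable and the ratio is formed only where the denominator is positive; as $d_j$ takes values in $[-1,1]$ on the compact interval $\T$, it is absolutely integrable, so $d = (d_1,d_2) \in L_d(\T)$. The substantive step—and the one I expect to be the main obstacle—is verifying $|d_j(\tau)| \le 1$ almost everywhere, i.e. the \emph{instantaneous} bound $|\dot h_j(\tau)| \le g_j(\tau,k)$. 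Assumption \ref{ass:tracking_error} supplies only the \emph{integrated} bound $|h_j(t)| \le \int_0^t g_j(\tau,k)\,d\tau$, and an integral bound on $h_j$ does not by itself control $\dot h_j$ pointwise: a transient in which $|\dot h_j|$ exceeds $g_j$ can be masked by previously accumulated slack, so existence of a valid $d$ is in fact equivalent to, not merely implied by, the pointwise inequality. Closing this gap is the crux, and I would do so by using that the tracking-error functions $g_{i,j}$ are built to dominate the instantaneous spatial velocity mismatch between the two models on each phase $\T_i$, giving $|\dot h_j(\tau)| = |\dot x\hij(\tau) - f_j(\tau,x(\tau),k)| \le \max_i g_{i,j}(\tau,k) = g_j(\tau,k)$ for almost every $\tau$, with the inequality of Assumption \ref{ass:tracking_error} being its time-integral.

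Granting the pointwise bound, the remainder is bookkeeping. On $\{g_j(\tau,k) > 0\}$ we have $g_j d_j = \dot h_j$ by definition, while on $\{g_j(\tau,k) = 0\}$ the pointwise bound forces $\dot h_j(\tau) = 0$, so $g_j(\tau,k)\,d_j(\tau) = 0 = \dot h_j(\tau)$ there as well; hence $g_j(\tau,k)\,d_j(\tau) = \dot h_j(\tau)$ for almost every $\tau$. Integrating and using $h_j(0) = 0$ yields $\int_0^t g_j(\tau,k)\,d_j(\tau)\,d\tau = h_j(t) = x\hij(t) - x_j(t)$ for every $t \in \T$. Reassembling the two coordinates via the Hadamard product and adding back $x_j(t) = \idx(x\hio)_j + \int_0^t f_j(\tau,x(\tau),k)\,d\tau$ gives $\idx(x\hi(t;x\hio,k)) = \idx(x\hio) + \int_0^t\big(f(\tau,x(\tau;\idx(x\hio),k),k) + g(\tau,k)\circ d(\tau)\big)\,d\tau$, which is the claim.
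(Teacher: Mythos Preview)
The paper does not prove this lemma in-line; it simply cites \cite[Lemma 12]{kousik2018_RTD_ijrr}. So there is no argument here to compare structure against, only the question of whether your proposal stands on its own relative to the assumptions the paper actually states.

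Your construction is the natural one, and you have correctly located the only nontrivial step: from Assumption~\ref{ass:tracking_error} you get the \emph{integral} bound $|h_j(t)| \le \int_0^t g_j(\tau,k)\,d\tau$, whereas what you need is the \emph{pointwise} bound $|\dot h_j(\tau)| \le g_j(\tau,k)$ a.e. You are also right that these are not equivalent: with $g_j\equiv 1$, the function $h_j(t)=0$ on $[0,1]$ and $h_j(t)=2(t-1)$ on $[1,3/2]$ satisfies $|h_j(t)|\le t$ everywhere yet has $|\dot h_j|=2>1$ on $(1,3/2)$, so no admissible $d_j$ exists. Hence the gap you flagged is genuine.

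Where the proposal falls short is in how you close that gap. You assert that the $g_{i,j}$ ``are built to dominate the instantaneous spatial velocity mismatch,'' which is exactly the pointwise inequality you need---but Assumption~\ref{ass:tracking_error} as written in this paper does not say that, and the integral statement cannot be upgraded to it without further hypotheses. In the cited reference the tracking-error function is indeed introduced as a bound on the instantaneous rate (so that \eqref{eq:g_tracking_error_defn} is its time-integral), and with that stronger reading your argument goes through verbatim. But relative to the assumptions as stated \emph{here}, the step ``$|\dot h_j(\tau)|\le g_j(\tau,k)$'' is an additional hypothesis, not a deduction; you should either invoke the pointwise formulation from \cite{kousik2018_RTD_ijrr} explicitly or note that Assumption~\ref{ass:tracking_error} must be strengthened to the instantaneous form for the lemma to follow.
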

\noindent As shown further on in Lemma \ref{lem:w_i_geq_1_on_trajs}, this ``matching'' of spatial components lets us prove that the FRS for the lower-dimensional trajectory-producing model contains the behavior of the robot while it tracks the trajectory-producing model.
Note that the focus of this paper is not how to compute $g$, but rather how to use $g$ to conservatively approximate the behavior of the robot (Section \ref{sec:reachability}), which can then be used for online trajectory design (Sections \ref{sec:not-at-fault_plans} and \ref{sec:online_planning}).
Methods such as SOS optimization can be used to identify $g$ \cite[Chapter 7]{lasserre2009moments}.
\section{Dynamic Environments}\label{sec:dynamic_environments}

The mid-level planning method proposed in this paper generates desired trajectories for the robot to track in dynamic environments that ensure it is always not-at-fault.
We focus on ``not-at-fault'' behavior as opposed to ``safe'' behavior since there exist simple situations where no planner could ever guarantee collision-free behavior in the presence of malicious nearby actors.
Not-at-fault behavior requires sensing and predicting obstacles in the environment.
To provide any guarantees about the robot's behavior, we must ensure that it can sense all unoccluded obstacles that are within a certain distance of the robot.
This section first formalizes obstacles, predictions, and fault, then specifies a minimum sensor horizon to ensure that, while following plans generated by our mid-level planner, our robot, is always not-at-fault.

\subsection{Obstacles, Fault, and Predictions}

\begin{defn}\label{def:obs}
Given a time $t \geq 0$, an \defemph{obstacle} is a set in $X$ that the robot is not allowed to intersect with at time $t$.
Denote the $n^\regtext{th}$ obstacle at $t$ by $O_t^n\subset X$ for each $n \in \{1,\cdots,N\obs\}$.
\end{defn}

\noindent Using this definition, we can define not-at-fault behavior:
\begin{defn}\label{def:not-at-fault}
Let $t \geq 0$ be the current time.
If robot is moving at time $t$, it is \defemph{not-at-fault} if not intersecting any obstacle $O_t^n$.
If the robot is stationary at time $t$, it is always \defemph{not-at-fault}.
\end{defn}

\noindent By Definition \ref{def:not-at-fault}, a robot could be not-at-fault by staying stationary forever.
However, as we show in Section \ref{sec:demos}, the presented method is able to move the robot past obstacles while still being provably not-at-fault.
A more specific definition of fault could also be considered, such as one that required giving surrounding vehicles or agents enough space to brake to a stop or safely swerve away from our robot.
However this would require placing specific assumptions on how surrounding vehicles or agents respond to our motion (e.g. reaction time or rationality) \cite{shwartz2017_mobileye_safety}.
Under those assumptions, the presented method could potentially be adapted to more specific definitions of fault.

To generate not-at-fault plans, our planner must have access to a description of each obstacle's future behavior.

\begin{defn}\label{def:pred}
A \defemph{prediction} is a map $P_b: \T \to \P(X)$ that contains all obstacles within $\dl\sense$ (Assumption \ref{ass:obstacles_sensed_within_dlsense}) at each time $t' \in \T$; i.e., $P_b(t') \supseteq \bigcup_n O_{t'}^n$.
At each $t \in \T$, $P_b(t) \subseteq X$ is a union of a finite number of closed polygons, where the subscript denotes that the minimum distance between any obstacle and the boundary of $P_b$ is at least $b + \vep$ (Definition \ref{def:spatial_est_error}); i.e., for any $t \in \T$ and any obstacle $O_t^n$, $\inf\left\{\norm{p - q}_2~|~p \in \bd P_b(t),\ q \in O_t^n\right\} \geq b + \vep.$
\end{defn}
\noindent According to Definition \ref{def:pred} predictions must be \emph{correct} (all obstacles lie within the prediction at every time) and \emph{conservative} (the prediction overapproximates the obstacles at every time).
In addition, the difference between the state predicted by the high fidelity model and the true state of the robot over each planning time horizon is included in each prediction.
For convenience, we say that the prediction $P_b$ is \emph{buffered} by $b + \vep$.

Creating predictions that satisfy Assumption \ref{ass:obstacles_sensed_within_dlsense} and Definition \ref{def:pred} is the topic of ongoing research \cite{hoj2017_ped_pred,bajcsy2018_multi_robot_collision_avoidance,liu2017_dynamic_env_kinematic,shwartz2017_mobileye_safety}, but is not the focus of this work.
Instead, given such a prediction, we show how to design guaranteed not-at-fault trajectories.
To ensure such predictions could be generated, we place an assumption on the robot's sensor performance.
\begin{assum}\label{ass:obstacles_sensed_within_dlsense}
The robot senses all obstacles within a sensor radius $\dl\sense > 0$ and predicts their behavior.
\end{assum}
\noindent Note, during each planning iteration, the robot plans using the prediction generated at the beginning of that iteration.
To set a lower bound on the length of the sensing radius, we assume that there is a bounded number of obstacles sensed at each time and that the speed of any obstacle is finite:
\begin{assum}\label{ass:obs_number_and_speed}
There are up to $N\obsmax$ obstacles sensed at any time; i.e., $N\obs \leq N\obsmax$.
The speed of all obstacles is bounded by $v\obsmax \geq 0$.
\end{assum}
\noindent Occluded regions can be treated as dynamic obstacles \cite{yu2018occlusion} that can be conservatively predicted as moving at $v\obsmax$ in any direction, or can be subject to specific rules \cite{shwartz2017_mobileye_safety}.

\subsection{Minimum Sensor Horizon}

Per the discussion after Assumption \ref{ass:obstacles_sensed_within_dlsense}, the robot has to replan using the predictions available at the beginning of each planning iteration.
So, it must be able to sense obstacles that could cause a collision while it tracks a desired trajectory that begins at the \emph{end} of each planning iteration.
This means we must enforce a lower bound on the robot's sensor horizon so it detects obstacles from sufficiently far away.
This bound depends on how quickly the relative distance between our robot and any obstacle can change.
Recall that our robot has a maximum speed $\vmax$ by Assumption \ref{ass:speed} and obstacles have a maximum speed $v\obsmax$ by Definition \ref{def:obs}. The \defemph{maximum relative speed} between the robot and any obstacle is
\begin{align}
    v\rel = \vmax + v\obsmax.\label{eq:v_rel}    
\end{align}
Note that $v\rel$ ignores environmental constraints (e.g., traffic flow in lanes) that may reduce the maximum relative speed.
We now specify the minimum sensor horizon.

\begin{thm}\label{thm:sensor_horizon}
Let the current time be $0$ WLOG, and suppose the robot is tracking a not-at-fault desired trajectory for $t \in \T$.
Suppose the robot's sensor horizon is $\dl\sense \geq (\tfin + \tau\plan)v\rel + 2\vep,$
with $v\rel$ as in \eqref{eq:v_rel} and $\vep$ as in Definition \ref{def:spatial_est_error}.
Then, no obstacle whose points all lie farther than than $\dl\sense$ from the robot at the current time can cause a collision with the robot at any $t' \in [\tau\plan,\tau\plan + \tfin]$.
\end{thm}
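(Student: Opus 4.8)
The plan is to reduce the collision question to a lower bound on the minimum distance between the robot's body and the obstacle, and to show that this distance cannot reach zero anywhere on the window of interest. Fix an obstacle all of whose points lie farther than $\dl\sense$ from the robot at the current time $t = 0$, and fix an arbitrary candidate collision time $t' \in [\tau\plan, \tau\plan + \tfin] \subseteq [0, \tau\plan + \tfin]$. Let $d(t)$ denote the minimum distance between the robot's body and the obstacle at time $t$. Since the robot is tracking a desired trajectory that is well-defined over $\T$ and whose speed obeys Assumption \ref{ass:speed}, and since both the robot footprint and the (polygonal) obstacle are compact, $d(0)$ is attained, so the hypothesis gives $d(0) > \dl\sense$. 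It then suffices to show $d(t') > 0$.

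First I would track a single obstacle point and a single robot-body point backward to $t = 0$. By Assumption \ref{ass:obs_number_and_speed} the obstacle point moves at most $v\obsmax\,t'$, and by Assumption \ref{ass:speed} the robot point moves at most $\vmax\,t'$, over the elapsed interval $[0,t']$. A triangle inequality then bounds the separation of any such pair at time $t'$ below by their initial separation minus $(\vmax + v\obsmax)t' = v\rel t'$. Taking the infimum over both bodies yields $d(t') \ge d(0) - v\rel t' > \dl\sense - v\rel(\tau\plan + \tfin)$, where the last step uses $d(0) > \dl\sense$ together with $t' \le \tau\plan + \tfin$; note the lower bound is decreasing in $t'$, so $t' = \tau\plan + \tfin$ is the worst case.

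Next I would fold in the spatial estimation error. Both the sensing at $t = 0$ and the origin of the planning frame are referenced to \emph{estimated} poses, each of which differs from the true pose by at most $\vep$ in the plane by Assumption \ref{ass:state_estimation_within_epsilon} and Definition \ref{def:spatial_est_error}. Accounting for these two estimates degrades the distance bound by at most $2\vep$, so the true separation satisfies $d(t') > \dl\sense - v\rel(\tau\plan + \tfin) - 2\vep$. Substituting the hypothesized horizon $\dl\sense \ge (\tfin + \tau\plan)v\rel + 2\vep$ makes the right-hand side nonnegative, and since the initial separation is strict we obtain $d(t') > 0$. Hence the robot's body does not intersect the obstacle at $t'$, and as $t'$ was arbitrary the obstacle cannot cause a collision on $[\tau\plan, \tau\plan + \tfin]$.

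The routine content is the speed-based triangle inequality and the monotone worst-case evaluation at $t' = \tau\plan + \tfin$. The main obstacle I expect is the careful bookkeeping of the $2\vep$ term: one must be precise about which distances are measured relative to true versus estimated poses, argue that exactly two independent $\vep$ corrections enter (the current sensing pose and the future planning-frame pose), and confirm this does not double-count the $\vep$ already absorbed into the prediction buffer of Definition \ref{def:pred}, since that buffer concerns the \emph{predicted} obstacle whereas the present statement concerns the raw obstacle and the robot's own pose uncertainty.
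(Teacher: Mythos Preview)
Your proposal is correct and follows essentially the same approach as the paper. The paper's proof splits the interval into the two phases $[0,\tau\plan]$ and $[\tau\plan,\tau\plan+\tfin]$, attributing $\tau\plan v\rel + \vep$ to the first and $\tfin v\rel + \vep$ to the second, then sums to get $\dl\sense$; you instead bound the whole interval by $v\rel t'$ in one shot and append the $2\vep$ at the end, which is a cosmetic reorganization of the same relative-speed-plus-estimation-error argument.
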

\begin{proof}
While our robot executes the current desired trajectory for duration $\tau\plan$, only obstacles within a distance $\dl_1 = \tau\plan v\rel + \vep$ could cause a collision, by \eqref{eq:v_rel} and Definition \ref{def:spatial_est_error}.
As in Assumption \ref{ass:tau_plan_and_braking}, our robot either brakes and comes to a stop or tracks a new desired trajectory during $t \in [\tau\plan,\tau\plan + \tfin]$.
Then, again by \eqref{eq:v_rel} and Definition \ref{def:spatial_est_error}, only obstacles within at least $\dl_2 = \dl_1 + \tfin v\rel + \vep$ of the robot at time $t = 0$ could cause a collision when the robot tracks the new desired trajectory.
Since $\dl\sense \geq \dl_2$, the proof is complete.
\end{proof}
\section{Relating Predictions to Trajectories}\label{sec:reachability}

At each planning iteration, we want to select a desired trajectory that the robot can safely track.
Therefore, we want to compute a \defemph{not-at-fault map} $\NAFexact: \P(\T\times X) \to \P(K)$ from time and space (where predictions exist) to the trajectory parameters that, when tracked, guarantee the robot is not-at-fault.
Computing such a map requires understanding where the robot could be at any time while tracking any desired trajectory.
This section describes a method to compute an indicator function on the set of times and points that the robot could reach (i.e. the FRS) using SOS programming based on \cite{majumdar2014convex,shia2014convex,kousik2017safe}.
We construct a time-varying FRS since we are concerned with dynamic environments.
We incorporate the time phases of \eqref{eq:traj-prod_model_time_phases} by using a SOS program for each phase.
Finally, we conservatively approximate the not-at-fault map with the resulting indicator function on the FRS.
Note that the indicator function could also be computed with zonotopes or the level-set method (e.g., \cite{althoff2014online,mitchell2005time}), but a numerically certified way to compute the not-at-fault map has not yet been explored for those methods.

\subsection{The Forward Reachable Set}
The FRS contains all times and states reachable by the robot, described by \eqref{eq:high-fidelity_model}, when tracking any trajectory produced by \eqref{eq:traj-prod_model_time_phases}.
Note that the high-fidelity model \eqref{eq:high-fidelity_model} is typically of higher dimension than the FRS indicator functions that can be computed with SOS programming \cite{kousik2018_RTD_ijrr}.
However, by Lemma \ref{lem:traj-tracking_model_matches_hi-fid_model}, the trajectory-producing model and tracking error function can ``match'' any high-fidelity model trajectory on the space $\T\times X$.
This is useful because predictions exist in $\T\times X$.
We define the FRS of the trajectory producing model under disturbance as
\begin{align}\begin{split}\label{eq:FRS_overapprox}
    \FRS = \big\{(t,x) \in\ &\T\times X~|~\exists~(x_0,k) \in X_0\times K,~d \in L_d(\T)~\regtext{s.t.} \\
        &\dot{\tilde{x}}(\tau) = f(\tau,\tilde{x}(\tau),k) + g(\tau,k)\circ d(\tau) ~\forall~\tau \in \T, \\ &\tilde{x}(0) = x_0,~\regtext{and}~\tilde{x}(t) = x \big\}.
\end{split}\end{align}

\subsection{Computing the FRS}\label{subsec:reachability_computation}
Per \eqref{eq:traj-prod_model_time_phases}, the dynamics $f$ and tracking error $g$ are time-switched with three phases.
We therefore compute an outer approximation of $\FRS$ with the following sequence of three optimization programs, one for each phase.
First, we define the linear operators $\mathcal{L}_{f_i}, \mathcal{L}_{g_i}: C^1(\T\times X \times K) \to C(\T\times X\times K)$ given by $\mathcal{L}_{f_i} \phi(t,x,k) = \frac{d\phi}{dt}(t,x,k) + (\nabla_x\, \phi(t,x,k) \cdot f_i(t,x,k)$ and $\mathcal{L}_{g_i} \phi(t,x,k) = (\nabla_x\, \phi(t,x,k)) \cdot g_i(t,k)$.
Now let $i \in \{$move, brake, stop$\}$ and $t_{0,i} \in \{0,\ \tau\plan,\ \tau\plan+\tau\brk(k)\}$.
Then the following program, as we show in Lemma \ref{lem:w_i_geq_1_on_trajs}, constructs an outerapproximation to the indicator function on $\FRS$ in each $\T_i$:
\begin{flalign}
		& & \underset{v_i,w_i,q_i}{\text{inf}} \hspace*{0.25cm} & \int_{\T_i \times X \times K} w_i(t,x,k) ~ d\lambda_{\T_i \times X \times K} && \tag{$D_i$}\label{prog:find_FRS} \\
		& & \text{s.t.} \hspace*{0.25cm} & \Lfi v_i(t,x,k) + q_i(t,x,k) \leq 0, && \text{on }\T_i\times X\times K \nonumber\\
        & & & \Lgi v_i(t,x,k) + q_i(t,x,k) \geq 0,  && \text{on }\T_i\times X\times K \nonumber \\
        & & & -\Lgi v_i(t,x,k) + q_i(t,x,k) \geq 0, && \text{on }\T_i\times X\times K \nonumber \\
        & & & q_i(t,x,k) \geq 0,  && \text{on }\T_i\times X\times K \nonumber \\
        & & & -v_i(t_{0,i},x,k) \geq 0,  && \text{on } X_{i,0}\times K \nonumber \\
        & & & w_i(t,x,k) \geq 0, && \text{on } 
        \T_i \times X\times K \nonumber \\
        & & & w_i(t,x,k) + v_i(t,x,k) - 1 \geq 0, && \text{on }\T_i \times X\times K \nonumber,
\end{flalign}
where $v_i, w_i, q_i \in C(\T\times X\times K)$.
The space $X_{i,0}$ is the initial subset of $X$ the robot occupies in each mode at the time $t_{0,i}$ and is defined as follows: $X\moveo$ is the footprint of the robot, described in Assumption \ref{ass:planar_rigid_body}; $X\brko$ is the 0-level set of $v\move$ in $X\times K$ at the end of $\T\move$; and $X\stpo$ is the 0-level set of $v\brk$ in $X\times K$ at the end of $\T\brk$.
Next, by applying Theorem 4 from \cite{shia2014convex}, one can show that any feasible solution to \eqref{prog:find_FRS} overapproximates $\FRS$ in each phase $\T_i$:

\begin{lem}\label{lem:w_i_geq_1_on_trajs}
Let $(v_i,w_i,q_i)$ be a feasible solution to $(D_i)$ in phase $i$.
 Let $x\hi(t;x\hio,k)$ denote the solution to the high-fidelity \eqref{eq:high-fidelity_model} at time $t$ beginning from $x\hio \in \{x\hi \in X\hi~|~\idx(x\hi) \in X_0\}$ under control input $u_k$.
For every phase $i$, $t \in T_i$, $k \in K$, and $x\hio \in \{x\hi \in X\hi~|~\idx(x\hi) \in X_0\}$,
\begin{align}
    w_i(t,\idx(x\hi(t)),k) \geq 1.
\end{align}
\end{lem}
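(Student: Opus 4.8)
The plan is to reduce the high-fidelity claim to a statement about the trajectory-producing model under disturbance, and then run a Lyapunov-style argument on $v_i$ using the constraints of \eqref{prog:find_FRS}. First, fix a phase $i$, a parameter $k \in K$, and an initial condition $x\hio$ with $\idx(x\hio) \in X_0$. By Lemma \ref{lem:traj-tracking_model_matches_hi-fid_model}, there is a disturbance $d \in L_d(\T)$ such that the projected high-fidelity trajectory $\tilde{x}(t) := \idx(x\hi(t;x\hio,k))$ satisfies $\dot{\tilde{x}}(\tau) = f(\tau,\tilde{x}(\tau),k) + g(\tau,k)\circ d(\tau)$ with $\tilde{x}(0) = \idx(x\hio) \in X_0$. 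Hence $(t,\tilde{x}(t))$ is precisely a trajectory of the disturbed system defining $\FRS$ in \eqref{eq:FRS_overapprox}, and it suffices to prove $w_i(t,\tilde{x}(t),k) \geq 1$ for every such disturbed trajectory.

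The core step is to show that $v_i$ stays nonpositive along these trajectories within the phase, i.e. $V(t) := v_i(t,\tilde{x}(t),k) \leq 0$ for all $t \in \T_i$. Differentiating with the chain rule and substituting the disturbed dynamics gives $\dot{V}(t) = \Lfi v_i(t,\tilde{x}(t),k) + \nabla_x v_i(t,\tilde{x}(t),k)\cdot\big(g_i(t,k)\circ d(t)\big)$. The first term is bounded above by $-q_i$ via the first constraint of \eqref{prog:find_FRS}, while the disturbance term is controlled by the two $\pm\Lgi$ constraints: since each component of $d(t)$ lies in $[-1,1]$, these constraints are designed so that $\nabla_x v_i\cdot(g_i\circ d) \leq q_i$. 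Together with $q_i \geq 0$, this yields $\dot{V}(t) \leq -q_i + q_i = 0$, so $V$ is nonincreasing on $\T_i$.

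It then remains to pin down $V$ at the start of each phase and chain the phases together. For the moving phase, $\tilde{x}(0) \in X\moveo = X_0$, so the initial-condition constraint $-v\move(0,\cdot,k)\geq 0$ gives $V(0) \leq 0$, and monotonicity propagates $V \leq 0$ across all of $\T\move$. The phases link through the definitions of the initial sets: because $v\move(\tau\plan,\tilde{x}(\tau\plan),k) \leq 0$, the reached state lies in $X\brko$ (the $0$-level set of $v\move$ at the end of $\T\move$), so the brake-phase initial constraint applies and forces $v\brk \leq 0$ at the phase boundary; the identical argument chains into the stopped phase through $X\stpo$. Thus $v_i(t,\tilde{x}(t),k) \leq 0$ on every $\T_i$, and the final constraint of \eqref{prog:find_FRS} gives $w_i \geq 1 - v_i \geq 1$ evaluated at $(t,\idx(x\hi(t)),k)$, which is the claim.

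The main obstacle is making the disturbance bound rigorous for the genuinely two-dimensional $d \in L_d(\T)$: one must verify that the scalar $\pm\Lgi$ constraints truly dominate $\nabla_x v_i\cdot(g_i\circ d)$ for \emph{every} admissible $d$ (not merely the coherent, single-sign case), and that $V$ is differentiable along the absolutely continuous trajectory so the fundamental theorem of calculus applies even though $v_i$ is stated only as continuous. The phase-linking is the other delicate point, since $X\brko$ and $X\stpo$ are defined implicitly as level sets of the preceding phase's $v_i$, so one must confirm these are reached before invoking the next phase's initial constraint. Both are exactly the measure-theoretic content of Theorem 4 of \cite{shia2014convex}, which I would cite to discharge the remaining details.
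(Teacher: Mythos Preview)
Your proposal is correct and follows the same route as the paper, which does not give a standalone proof but simply invokes Theorem~4 of \cite{shia2014convex}. Your Lyapunov-style sketch is precisely the content of that theorem, and the two delicate points you flag (the componentwise disturbance bound for two-dimensional $d$ and the phase-linking through the level sets $X\brko,X\stpo$) are exactly the details the paper also defers to that citation.
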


\subsection{Implementation}

We transform each \eqref{prog:find_FRS} into a semi-definite program (SDP) using SOS programming via the Spotless toolbox \cite{tobenkin2013spotless}, as covered in detail by \cite{shia2014convex,zhao2017optimal,majumdar2014convex}.
We solve the SDP with MOSEK \cite{mosek2010mosek}.
The key implementation difference is, where \cite{shia2014convex} and \cite{zhao2017optimal} solve a single SDP over multiple hybrid system modes, we solve a sequence of SDPs for each phase $\T_i$, with $i \in \{$move,brake,stop$\}$ and the initial condition sets $X_{i,0}$ implemented as discussed above.
For each $i$, the sequence of SDPs return $(v_i,w_i,q_i)$ as polynomials of fixed degree.
Note one can show that the solution to each SDP is a feasible solution to $(D_i)$ for each $i$ \cite[Theorem 6]{shia2014convex}. 
As a result, one can apply the result of Lemma \ref{lem:w_i_geq_1_on_trajs} to the solution of each SDP.

\subsection{The Not-at-Fault Map}

We conclude this section by conservatively approximating the not-at-fault map $\NAFexact$, beginning from the following observation:
To ease notation, extend the domain of each $w_i$ to $\T\times X \times K$ by setting $w_i(t,\cdot,\cdot) = 0~\forall~t \not\in T_i$ (we do not require $w_i$ to be differentiable on $\T\times X \times K$).
Then, we combine the $w_i$ into a single $w: \T\times X \times K \to \R$ as
\begin{align}
    w(t,x,k) = \max_i\{w_i(t,x,k)\}.
\end{align} 
By Lemma \ref{lem:w_i_geq_1_on_trajs}, $w \geq 1$ on trajectories of the high-fidelity model.
Using $w$, define $\NAF: \P(\T\times X) \to \P(K)$ as
\begin{align}\label{eq:not-at-fault_map}
    \NAF(T' \times X') = \{k\in K~|~w(t,x,k) < 1,\ t \in T',\ x \in X'\}.
\end{align}
It follows from Lemma \ref{lem:w_i_geq_1_on_trajs} that $\NAF$ underrapproximates $\NAFexact$ (meaning $k \in \NAF(t,x) \implies k \in \NAFexact(t,x)$).
Next, we use $\NAF$ to determine the not-at-fault parameters at each planning iteration.
\section{Not-at-Fault Plans}\label{sec:not-at-fault_plans}

\begin{figure}
    \centering
    \includegraphics[width=0.8\columnwidth]{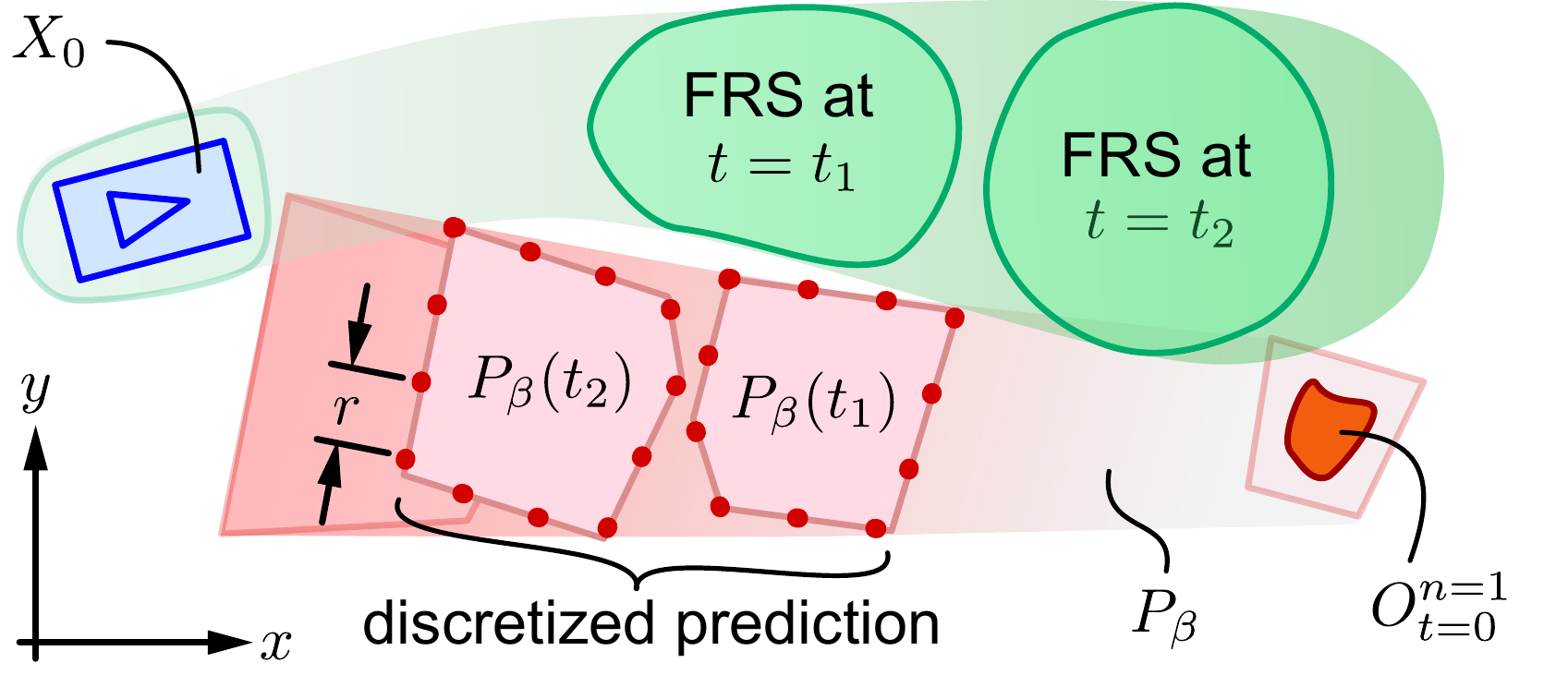}
    \caption{Discretization of a prediction $P_{\bt}$ as in Lemma \ref{lem:not-at-fault_over_short_interval}.
    Our robot plans a not-at-fault trajectory (FRS shown left to right) for any $t \in \T$ given the prediction (right to left).
    Temporal discretization is shown by two times, $t_1$ and $t_2$; at each time, the prediction is spatially discretized per Lemma \ref{lem:not-at-fault_at_t}.}
    \vspace*{-0.5cm}
    \label{fig:discretized_prediction}
\end{figure}

We now address how to find not-at-fault plans online.
This requires representing predictions in the trajectory parameter space in a way that enables real-time operation while guaranteeing not-at-fault behavior.
To understand how we construct this representation, consider a single planning iteration.
Suppose the robot generates a prediction $P_b$ for some $b > 0$ as in Definition \ref{def:pred} and Assumption \ref{ass:obstacles_sensed_within_dlsense}.
Then
\begin{align}
   K\naf = \NAF(P_b)
\end{align}
is a subset of trajectory parameters for which the robot is not-at-fault for all $t \in \T$.
Since the robot's sensor horizon $\dl\sense$ is as in Theorem \ref{thm:sensor_horizon}, if the robot executes the fail-safe maneuver from any $k \in K\naf$ for $t \geq \tau\plan$ and remains stopped thereafter, then it is not-at-fault for all time.
To choose a not-at-fault trajectory $k \in K\naf$, we must compute $K\naf$ at each planning iteration; it can be conservatively approximated with SOS programming, but doing so is intractable for real-time planning \cite[Section 6.1]{kousik2018_RTD_ijrr}.

This section presents a method to generate a subset of $K\naf$ by evaluating $\NAF$ on a discrete, finite subset of $\T\times X$.
This allows for optimizing over not-at-fault $k$ in real time using Algorithm \ref{alg:trajopt} in Section \ref{sec:online_planning}.
\citet{kousik2018_RTD_ijrr} prescribe a similar method to discretize obstacles while maintaining safety.
Unfortunately their technique is restricted to static obstacles.
We extend this method to incorporate predictions of dynamic obstacles.
Figure \ref{fig:discretized_prediction} illustrates the discretized prediction.

\begin{rem}\label{rem:robot_footprint_is_rectangle}
Throughout this section, we assume the robot has a rectangular footprint $X_0$ (as in Assumption \ref{ass:planar_rigid_body}) with width $W > 0$ and length $L > W$.
We extend the results to a circular robot footprint in Remark \ref{rem:b_and_r_for_circular_footprint}.
\end{rem}

\subsection{The Discretization Map}
Our goal is to discretize a prediction in space and time by introducing a \defemph{discretization map} that takes the graph of a prediction (in $\P(\T\times X)$) and returns a finite subset of the graph.
Example outputs of this map are illustrated in Figure \ref{fig:discretized_prediction}.
To construct this map, notice that by Definition \ref{def:pred}, at time $t$, the set $P_b(t) \subset X$ is the union of a finite set of closed polygons.
Therefore, the boundary $\bd P_b(t)$ can be written as a set $V \subset X$ of vertices and a set $E \subset X$ of edges \cite[Chapter 9.2]{minkowski_sum_fogel}.
We begin by sampling $\bd P_b (t)$ using the map $\sample: \P(X)\times \R_{\geq 0} \to X$. 
In particular, given $P_b(t)$ and a \defemph{point spacing} $r > 0$, let $\sample(P_b(t),r)$ return a (finite) set $A \subset X$ such that $V \subset A$ and such that for every point $a$ in $A$, there exists at least one distinct point $a' \in A$ such that $a' \in E$ and $\norm{a - a'}_2 \leq r$.
In other words, $\sample$ returns ``consecutive'' points around $\bd P_b(t)$ that are spaced no farther than $r$ apart.
We then define the discretization map $\discfn: \P(\T\times X)\times\P(\T)\times\R_{\geq 0} \to \P(\T\times X)$ as:
\begin{align}\begin{split}
    \discfn(P_b,T\disc,r) = \{(t,x) \in\ &T \times X ~|~t \in T\disc \text{ and } \\ &x \in \sample(P_b(t),r)\},
\end{split} \label{eq:disc_map}
\end{align}
where $P_b$ is the graph of $P_b(t)$ and $T\disc \subset T$.
In the remainder of this section, we show how to pick $T\disc$ and $r$ such that $\NAF(\discfn(P_b,T\disc,r)) \subseteq \NAF(P_b)$.

The following lemma, which is a direct application of \cite[Theorem 68]{kousik2018_RTD_ijrr}, illustrates how to pick $r > 0$ to discretize $P_b(t) \subset X$ such that the robot is not-at-fault at $t$.
This result requires Assumption \ref{ass:planar_rigid_body}, wherein the robot is a rigid body, and its footprint $X_0$ is compact and convex.
\begin{lem}\label{lem:not-at-fault_at_t}
\emph{(Not-at-fault at $t$)}
Pick a buffer distance $b \in (0, W/2)$, where $W$ is as in Remark \ref{rem:robot_footprint_is_rectangle}.
Let $P_b$ be a prediction as in Definition \ref{def:pred}, $t \in \T$, $t > 0$, $r = 2b$.
If the robot is not-at-fault for all $t' \in [0,t)$ then, while tracking $k \in \NAF(\discfn(P_b,\{t\},r))$, it is not-at-fault at time $t$.
\end{lem}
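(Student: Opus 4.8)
The plan is to argue by contraposition. Suppose the robot is not-at-fault on all of $[0,t)$ but at-fault at time $t$, so by Definition \ref{def:not-at-fault} its true footprint intersects some obstacle $O_t^n$. I will produce a sampled point $x^\ast \in \sample(P_b(t),r)$ at which $w(t,x^\ast,k) \geq 1$; since $k \in \NAF(\discfn(P_b,\{t\},r))$ forces $w(t,x,k) < 1$ at every sampled point by \eqref{eq:not-at-fault_map}, this contradiction establishes the claim. The bridge between the analytic object $w$ and the geometry of the footprint is Lemma \ref{lem:w_i_geq_1_on_trajs}, which guarantees $w(t,\idx(x\hi(t)),k) \geq 1$ for every point of the high-fidelity footprint; thus it suffices to show that some sampled point lies inside the high-fidelity footprint at time $t$.

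First I would pass from the true footprint to the high-fidelity footprint using the buffer. Because the prediction $P_b$ is buffered by $b+\vep$ (Definition \ref{def:pred}) and the true footprint lies within spatial distance $\vep$ of the high-fidelity footprint (Definition \ref{def:spatial_est_error}), a true-footprint point touching $O_t^n$ corresponds to a high-fidelity-footprint point lying at least a distance $b$ inside $P_b(t)$, i.e.\ strictly in the interior $\Int P_b(t)$. Separately, the hypothesis that the robot is not-at-fault for all $t' \in [0,t)$, together with continuity in time of the rigid-body motion (Assumption \ref{ass:planar_rigid_body}), rules out the degenerate case in which the footprint is entirely engulfed by $P_b(t)$: the footprint must approach the obstacle region from outside, so the high-fidelity footprint (a rigid-body image of $X_0$, hence convex and connected) has points both in $\Int P_b(t)$ and outside $P_b(t)$, and therefore crosses the boundary $\bd{P_b(t)}$.

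The heart of the argument is then purely geometric and is exactly the content of \cite[Theorem 68]{kousik2018_RTD_ijrr}. Since $\sample$ returns consecutive boundary points spaced no farther than $r = 2b$ apart, the closed disks of radius $b$ centered at these points cover $\bd{P_b(t)}$. The high-fidelity footprint is convex with minimum width $W$, and the choice $b \in (0,W/2)$ makes its width strictly exceed the sample spacing $r = 2b$; consequently a convex set that both meets $\Int P_b(t)$ and crosses $\bd{P_b(t)}$ cannot ``thread between'' consecutive samples and must contain at least one sampled point $x^\ast$. Applying Lemma \ref{lem:w_i_geq_1_on_trajs} at $x^\ast$ gives $w(t,x^\ast,k) \geq 1$, the desired contradiction.

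I expect the main obstacle to be this geometric covering step: verifying that a convex footprint of minimum width $W$ crossing a boundary sampled at spacing $2b$ with $b < W/2$ is forced to contain a sample point. Making this precise requires the convexity and minimum-width properties of $X_0$ from Assumption \ref{ass:planar_rigid_body} and careful bookkeeping of the two roles of the buffer --- $\vep$ absorbing the state-estimation error and $b$ absorbing the discretization error --- but since the statement is precisely \cite[Theorem 68]{kousik2018_RTD_ijrr}, the proof can invoke that result once the reduction to the high-fidelity footprint and the boundary-crossing property have been established.
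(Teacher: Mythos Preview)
Your approach matches the paper's, which gives no proof and simply states the lemma as a direct application of \cite[Theorem~68]{kousik2018_RTD_ijrr}; you have correctly identified that theorem's geometric covering argument (a convex footprint of minimum width $W>2b$ crossing a boundary sampled at spacing $r=2b$ must contain a sample) as the core step, and you have structured the contraposition and the $\vep$-bookkeeping between the true and high-fidelity footprints around it.

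There is one gap in your reduction, though. You argue that the not-at-fault hypothesis on $[0,t)$ together with continuity forces the high-fidelity footprint to straddle $\bd P_b(t)$, but not-at-fault means avoiding the obstacles $O_{t'}^n$, not avoiding the prediction set. Nothing in that hypothesis prevents the footprint from lying entirely inside $P_b(t')$ for all $t'\le t$ while staying off the actual obstacle until time $t$; at the first collision instant the footprint could then be engulfed by $P_b(t)$ and contain no boundary sample, so your ``approach from outside'' inference does not go through as written. In the static setting of \cite[Theorem~68]{kousik2018_RTD_ijrr} this issue is sidestepped because the relevant reachable set is the swept footprint over all of $\T$, a connected set that contains the initial footprint $X_0$ outside the buffered obstacle and therefore must cross its boundary; your adaptation to the time-varying FRS evaluated at a single instant loses that connectivity-in-time argument. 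To close the gap you should either invoke the cited theorem as a black box, as the paper does, or make explicit whatever additional structure on $P_b$ (for instance, that it is the minimal $(b+\vep)$-dilation of the obstacles rather than an arbitrary superset) is needed to rule out engulfment.
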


\noindent A point spacing $r$ that satisfies this lemma is illustrated in Figure \ref{fig:discretized_prediction}. 
Next, we create $T\disc \subset T$ such that ensuring safety at each $t \in T\disc$ is sufficient to ensure safety at each $t \in \T$.
To do so, we first explain how to pick a duration $\tau\disc > 0$ such that, if the robot is safe at a pair of times $t_1$ and $t_1 + \tau\disc$, it is safe for all $t \in [t_1,t_1+\tau\disc]$.
\begin{lem}\label{lem:not-at-fault_over_short_interval}
\emph{(Not-at-fault on a short interval)}
Pick $b \in (0,W/2)$ and a \defemph{temporal buffer} $b_t \in (0,\tfin\cdot v\rel/2)$, where $v\rel$ is as in \eqref{eq:v_rel}.
Let $\beta = b + b_t$ and suppose $P_{\bt}$ is a prediction.
Define the \defemph{maximum time discretization}:
\begin{align}
    \tau\discmax = (2b_t) / v\rel.\label{eq:time_discretization}
\end{align}
Suppose the current time is $t_1 \in [0, \tfin - \tau\discmax]$, $\tau\disc \in (0,\tau\discmax]$, and $t_2 = t_1 + \tau\disc$.
If the robot is not-at-fault for all $t \in [0,t_1)$, then it is not-at-fault over $[t_1,t_2]$ when tracking any $k \in \NAF\left(\discfn\left(P_{\bt},\{t_1,t_2\},r\right)\right)$.
\end{lem}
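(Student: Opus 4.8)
The plan is to reduce the claim to the single-time result, Lemma~\ref{lem:not-at-fault_at_t}, by using the extra spatial buffer built into $P_{\bt}$ to absorb the relative motion of the robot and the obstacles across each half of the interval $[t_1,t_2]$. The prediction $P_{\bt}$ is buffered by $\beta + \vep = (b + b_t) + \vep$, i.e.\ by exactly $b_t$ more than the amount $b + \vep$ that Lemma~\ref{lem:not-at-fault_at_t} requires for the spatial discretization with spacing $r = 2b$. The key structural fact is that, because $\tau\disc \leq \tau\discmax = 2b_t/v\rel$, every intermediate time $t \in [t_1,t_2]$ lies within $b_t/v\rel$ of either $t_1$ or $t_2$; the factor of $2$ in \eqref{eq:time_discretization} is precisely what guarantees this. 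I would first check the boundary conditions: $b_t < \tfin\, v\rel/2$ gives $\tau\discmax < \tfin$, and $t_1 \leq \tfin - \tau\discmax$ gives $t_2 = t_1 + \tau\disc \leq \tfin$, so both checked times $t_1,t_2$ lie in $\T$ and $P_{\bt}$ is defined there.

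Fix $k \in \NAF(\discfn(P_{\bt},\{t_1,t_2\},r))$ and any $t \in [t_1,t_2]$; WLOG suppose $t$ is the nearer to $t_1$, so that $t - t_1 \leq \tau\disc/2 \leq b_t/v\rel$ (the case nearer $t_2$ is symmetric). Over $[t_1,t]$ the robot's body moves by at most $\vmax(t-t_1)$ (Assumption~\ref{ass:speed}) and any obstacle moves by at most $v\obsmax(t-t_1)$ (Assumption~\ref{ass:obs_number_and_speed}), so by \eqref{eq:v_rel} their relative displacement is at most $v\rel(t-t_1) \leq b_t$. Next I would invoke the geometric content of Lemma~\ref{lem:not-at-fault_at_t} at time $t_1$: since $P_{\bt}$ is in particular a prediction buffered by $b + \vep$, and since $k \in \NAF(\discfn(P_{\bt},\{t_1\},r))$ with $r = 2b$, the robot's footprint at $t_1$ is disjoint from the true obstacles $O_{t_1}^n$; moreover the surplus buffer $b_t$ certifies a clearance of at least $b_t$ between the footprint and each $O_{t_1}^n$.

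It then remains to transfer this clearance to the intermediate time $t$. Since the footprint and each obstacle together shift by at most $b_t$ between $t_1$ and $t$, and the clearance at $t_1$ is at least $b_t$, the footprint at $t$ cannot intersect $O_t^n$, so the robot is not-at-fault at $t$; the symmetric argument about $t_2$ covers $t \in [t_1 + \tau\disc/2, t_2]$, yielding not-at-fault on all of $[t_1,t_2]$. The main obstacle is making the phrase ``surplus buffer yields $b_t$ of clearance'' precise, i.e.\ showing that the excess of $P_{\bt}$'s buffer over the $b + \vep$ demanded by Lemma~\ref{lem:not-at-fault_at_t} converts into an honest spatial margin in the robot's frame, and then chaining the endpoint clearances with the hypothesis of not-at-fault on $[0,t_1)$ through a first-collision-time argument so that the robot provably stays on the trajectory parameterized by $k$ throughout $[t_1,t_2]$.
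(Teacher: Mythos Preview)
Your proposal is correct and follows essentially the same approach as the paper: apply Lemma~\ref{lem:not-at-fault_at_t} at both $t_1$ and $t_2$ to extract a clearance of $b_t$ from the surplus buffer in $P_{\bt}$, then bound the relative motion via $v\rel$. The only cosmetic difference is in how the factor of $2$ in \eqref{eq:time_discretization} is used: you split $[t_1,t_2]$ at the midpoint and bound the one-way motion from the nearer endpoint by $b_t$, whereas the paper argues that a collision between $t_1$ and $t_2$ would force a round trip of relative distance strictly exceeding $2b_t$, which cannot happen over a time span at most $\tau\discmax$.
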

\begin{proof}
By Lemma \ref{lem:not-at-fault_at_t}, the closest that our robot can be to any obstacle at time $t_1$ is strictly greater than $b_t$ when tracking $k$.
Similarly, the closest it can be at time $t_2$ is strictly greater than $b_t$.
So, for the robot to collide with any obstacle over $[t_1, t_2]$, the robot must travel strictly more than $2b_t$ relative to the obstacle.
Therefore, the time difference between $t_1$ and $t_2$ must be less than or equal to $(2b_t)/v\rel =: \tau\discmax$.
Since $\tau\disc \leq \tau\discmax$ by construction, the relative distance that can be traveled over $[t_1, t_2]$ is less than or equal to $2b_t$.
\end{proof}

\noindent The time discretization of Lemma \ref{lem:not-at-fault_over_short_interval} is shown in Figure \ref{fig:discretized_prediction}.
We now ensure not-at-fault behavior for all time.

\begin{thm}\label{thm:not-at-fault_for_all_time}
\emph{(Not-at-fault for all time)}
Let $b, b_t, \bt, P_{\bt}$, and $r$ be as in Lemma \ref{lem:not-at-fault_over_short_interval}, $n\pred = \ceil{\tfin/\tau\discmax}$, $\tau\disc = \tfin/n\pred$, and
\begin{align}
    T\disc = \left\{j\cdot\tau\disc\right\}_{j = 0}^{n\pred} \label{eq:Tdisc}
\end{align}
Let $K_{\T} = \NAF\left(\discfn\left(P_{\bt},T\disc,r\right)\right)$.
If the robot is not at fault at $t = 0$, then it is not-at-fault for all $t \geq 0$ if it tracks any $k \in K_{\T}$ over $\T$ and remains stopped thereafter (i.e. $K\naf \supseteq K_{\T}$).
\end{thm}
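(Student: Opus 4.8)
The plan is to reduce the claim to showing that any $k \in K_{\T}$ keeps the robot not-at-fault on all of $\T = [0,\tfin]$, and then to dispatch $t > \tfin$ using the stopped phase. The structural fact that drives everything is that $\NAF$ is inclusion-reversing and carries unions to intersections: directly from \eqref{eq:disc_map} we have $\discfn(P_{\bt},T\disc,r) = \bigcup_{t_j \in T\disc} \discfn(P_{\bt},\{t_j\},r)$, and from \eqref{eq:not-at-fault_map} we have $\NAF(S) = \{k \in K : w(t,x,k) < 1 \ \forall\, (t,x) \in S\}$, so the ``for all'' quantifier splits over the union to give $K_{\T} = \bigcap_{t_j \in T\disc} \NAF(\discfn(P_{\bt},\{t_j\},r))$. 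Hence any $k \in K_{\T}$ automatically lies in $\NAF(\discfn(P_{\bt},\{t_j,t_{j+1}\},r))$ for every consecutive pair $t_j, t_{j+1} \in T\disc$, which is exactly the membership hypothesis needed to invoke Lemma \ref{lem:not-at-fault_over_short_interval} on each subinterval.

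Next I would run an induction over $j \in \{0,\ldots,n\pred\}$ on the statement ``the robot is not-at-fault on $[0, t_j]$,'' where $t_j = j\tau\disc$. First observe that $\tau\disc = \tfin/n\pred \le \tau\discmax$, since $n\pred = \ceil{\tfin/\tau\discmax} \ge \tfin/\tau\discmax$; this is precisely what makes each subinterval $[t_j, t_{j+1}]$ short enough to satisfy the only length constraint of Lemma \ref{lem:not-at-fault_over_short_interval}. The base case $j = 0$ is the theorem's hypothesis that the robot is not-at-fault at $t = 0$. For the step, assume not-at-fault on $[0, t_j]$; applying Lemma \ref{lem:not-at-fault_over_short_interval} with $t_1 = t_j$ and $t_2 = t_{j+1}$ (its precondition ``not-at-fault on $[0,t_j)$'' is implied by the hypothesis, and the required $k \in \NAF(\discfn(P_{\bt},\{t_j,t_{j+1}\},r))$ holds by the previous paragraph) yields not-at-fault on $[t_j, t_{j+1}]$, and concatenating with the hypothesis gives not-at-fault on $[0, t_{j+1}]$. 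Because the uniform spacing makes the closed intervals $[t_j, t_{j+1}]$ tile $[0, \tfin]$ with no gaps, the induction terminates at $j = n\pred$ with not-at-fault on $[0, t_{n\pred}] = \T$.

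It remains to cover $t > \tfin$ and to certify that being not-at-fault against $P_{\bt}$ is genuine not-at-fault behavior. For $t > \tfin$ the desired trajectory has entered its stopped phase $\T\stp$ and the robot ``remains stopped thereafter,'' so it is not-at-fault by Definition \ref{def:not-at-fault}; the presence of $\T\stp$ in \eqref{eq:traj-prod_model_time_phases} is what guarantees the robot has physically come to rest by $\tfin$. That $P_{\bt}$ is an adequate surrogate for the true obstacles follows from Theorem \ref{thm:sensor_horizon} together with the buffering in Definition \ref{def:pred}: obstacles farther than $\dl\sense$ cannot reach the robot over the relevant horizon, and the $\bt + \vep$ buffer absorbs both the spatial estimation error and the temporal slack used in Lemma \ref{lem:not-at-fault_over_short_interval}. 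Combining the three pieces, every $k \in K_{\T}$ keeps the robot not-at-fault for all $t \ge 0$, which is exactly the assertion $K\naf \supseteq K_{\T}$.

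I expect the only delicate point to be the left endpoint $j = 0$, where Lemma \ref{lem:not-at-fault_over_short_interval} is applied with $t_1 = 0$: the underlying appeal to Lemma \ref{lem:not-at-fault_at_t} is stated for $t > 0$, so the margin ``strictly greater than $b_t$'' at $t = 0$ cannot be drawn from that lemma and must instead be supplied by the base hypothesis (not-at-fault at $t=0$) together with the $b_t$ portion of the prediction buffer. The second thing to check carefully is the gapless, exact tiling of $\T$: choosing the uniform step $\tau\disc = \tfin/n\pred$ rather than the maximal $\tau\discmax$ is what makes $t_{n\pred}$ land exactly on $\tfin$ and leaves no uncovered subinterval, so that the chain of closed intervals produced by Lemma \ref{lem:not-at-fault_over_short_interval} genuinely covers all of $\T$.
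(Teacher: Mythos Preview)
Your proposal is correct and follows essentially the same approach as the paper: iterate Lemma \ref{lem:not-at-fault_over_short_interval} across the consecutive discretization times to cover $\T$, then invoke the stopped phase and Definition \ref{def:not-at-fault} for $t > \tfin$. Your write-up is considerably more detailed than the paper's (which is a two-sentence sketch), and in particular your explicit verification that $\tau\disc \le \tau\discmax$, your decomposition $K_{\T} = \bigcap_j \NAF(\discfn(P_{\bt},\{t_j\},r))$, and your attention to the $t_1 = 0$ edge case and the exact tiling fill in points the paper leaves implicit.
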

\begin{proof}
Since the robot is not-at-fault at $t = 0$, 
by applying Lemma \ref{lem:not-at-fault_over_short_interval} at each $j\tau\disc$ for $j = 1,\cdots,n\pred$, the robot is not at fault for all $t \in \T$.
By \eqref{eq:traj-prod_model_time_phases} and Lemma \ref{lem:traj-tracking_model_matches_hi-fid_model}, the robot is stopped for all $t \geq \tfin$, so it is also not-at-fault by Definition \ref{def:not-at-fault}.
\end{proof}

\begin{rem}\label{rem:b_and_r_for_circular_footprint}
If the robot is circular instead of rectangular, with diameter $R$, pick $b \in (0,R/2)$ and set $r = 2R\sin\left(\cos\inv\left(\frac{R-b}{R}\right)\right)$.
Then, Lemma \ref{lem:not-at-fault_at_t}, Lemma \ref{lem:not-at-fault_over_short_interval}, and Theorem \ref{thm:not-at-fault_for_all_time} still hold \cite[Example 67]{kousik2018_RTD_ijrr}.
\end{rem}

\noindent Next, we use the discretized prediction for online planning.
\section{Online Planning}\label{sec:online_planning}

\begin{algorithm}[t]
\small
\begin{algorithmic}[1]
    \State {\bf Require:} $b$, $b_t$, $\NAF$, $T\disc$, $k_0 \in K$, $x\hio$, and $J: K \to \R$.
    
    \State {\bf Initialize:} $j = 0$, $t_j = 0$, $k^* = k_0$, $\bt = b + b_t$, $r = 2b$, $\xhij = x\hio$, \texttt{feas} = \texttt{true}.
    \State{\bf Loop:} // Line \ref{lin:control} executes at the same time as Lines \ref{lin:sense}--\ref{lin:predict}        
        
    \State\hspace{0.2in}{\bf Track} $k^*$ for $[t_j,t_j+\tau\plan)$\label{lin:control}
    
    \State\hspace{0.2in}$P_{\bt} \leftarrow \texttt{senseAndPredictObstacles}()$.\label{lin:sense}
    
    \State\hspace{0.2in}$D \leftarrow \discfn(P_{\bt},T\disc,r)$.\label{lin:discretize}
    
    \State\hspace{0.2in}{\bf Try} $k^* \leftarrow \regtext{argmin}_k\{J(k)~|~k\in\NAF(D)\}$ for duration $\tau\plan$\label{lin:trajopt}
    
    \State\hspace{0.2in}{\bf Catch} continue // $k^*$ is unchanged \label{lin:catch}
    
    \State\hspace{0.2in}$\xhijp \leftarrow \texttt{estimateFutureState}(t_j + \tau\plan, \xhij, k^*)$\label{lin:predict}
        
    \State\hspace{0.2in} $t_{j+1} \leftarrow t_j + \tau\plan$ and $j \leftarrow j + 1$\label{lin:increment}
    
    \State{\bf End}
\end{algorithmic}
\caption{\small RTD-D Online Planning}
\label{alg:trajopt}
\end{algorithm}

We now use the discretized prediction from Section \ref{sec:not-at-fault_plans} to plan online.
Assume the robot at $t = 0$ has a not-at-fault $k_0 \in K$.
Let $\NAF$ be as in \eqref{eq:not-at-fault_map}.
Pick $b$ and $b_t$ as in Lemma \ref{lem:not-at-fault_over_short_interval}, and 
Let $P_\bt$ be a prediction as in Definition \ref{def:pred}.
Let $J: K \to \R$ be an arbitrary cost function, such as a quadratic cost function that is minimized when the robot reaches a particular location.

Algorithm \ref{alg:trajopt} describes how RTD-D works online.
In each planning iteration, $\texttt{senseAndPredictObstacles}$ creates predictions as in Definition \ref{def:pred} (Line \ref{lin:sense}).
These obstacles are then discretized using \eqref{eq:disc_map} (Line \ref{lin:discretize}).
Then the planner attempts to find $k^*$ within $\tau\plan$ by optimizing over the user specified cost $J$ subject to satisfying the constraints (Line \ref{lin:trajopt}).
By the definition of $\NAF$ in \eqref{eq:not-at-fault_map}, the constraint in Line \ref{lin:trajopt} is equivalent to saying $w(t,x,k) < 1$ on any $(t,x)$ in the discretized prediction\footnote{Note, this constraint can be conservatively approximated as, e.g., $w \leq 0.999$, during implementation.}.
If $k^*$ is found within $\tau\plan$ in Line \ref{lin:trajopt}, it is tracked as in Assumption \ref{ass:tau_plan_and_braking} until a new $k^*$ is found; otherwise, the algorithm moves to Line \ref{lin:catch}, leaving $k^*$ unchanged.
On Line \ref{lin:predict}, \texttt{estimateFutureState} forward integrates \eqref{eq:high-fidelity_model} beginning at $x\hij$ under the control input $u_{k^*}$ (that tracks $k^*$) for a duration $\tau\plan$.
Concurrently, with each of these steps, the robot tracks the last feasible trajectory parameters it has constructed (Line \ref{lin:control}).
Note that we assume Lines \ref{lin:sense}, \ref{lin:discretize}, and \ref{lin:predict} happen instantaneously; however, in practice, the time to perform these steps can be subtracted from $\tau\plan$ to ensure satisfactory performance.
Finally, by applying Theorem \ref{thm:not-at-fault_for_all_time}, we can prove that RTD-D is not-at-fault for all time:

\begin{thm}\label{thm:alg_is_not-at-fault}
Suppose the robot's sensor horizon is as in Theorem \ref{thm:sensor_horizon}, the current time is $0$, and the robot has a not-at-fault $k_0 \in K$.
Then, by performing trajectory design and control using Algorithm \ref{alg:trajopt} with parameters as defined in Theorem \ref{thm:not-at-fault_for_all_time}, the robot is not-at-fault for all time.
\end{thm}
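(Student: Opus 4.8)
The plan is to argue by induction on the planning iterations $j = 0, 1, 2, \ldots$ indexed in Algorithm \ref{alg:trajopt}, carrying the invariant: at the start of iteration $j$ (physical time $t_j$), the robot is not-at-fault, and the parameter $k^*$ it is currently tracking certifies not-at-fault behavior for all $t \geq t_j$ if continued (including its built-in fail-safe braking maneuver). Establishing this invariant for every $j$ immediately yields the claim, since the robot is then not at fault at every instant. For the base case ($j = 0$) I would use directly that the robot is not-at-fault at $t = 0$ and holds a not-at-fault $k_0 \in K$: invoking Theorem \ref{thm:not-at-fault_for_all_time} at $t = 0$, tracking $k_0$ over $\T$ and remaining stopped thereafter keeps the robot not-at-fault for all time, which is exactly the invariant at $j = 0$.

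For the inductive step, assume the invariant holds at iteration $j$. During $[t_j, t_j + \tau\plan)$ the robot tracks the old $k^*$ (Line \ref{lin:control}), which by the invariant is not-at-fault; in particular the robot is not-at-fault at the handoff instant $t_{j+1} = t_j + \tau\plan$. Concurrently, \texttt{senseAndPredictObstacles} returns a prediction $P_{\bt}$ (Line \ref{lin:sense}), discretized to $D = \discfn(P_{\bt}, T\disc, r)$ (Line \ref{lin:discretize}). Here I would apply Theorem \ref{thm:sensor_horizon}: since $\dl\sense \geq (\tfin + \tau\plan)v\rel + 2\vep$, every obstacle that could collide with the robot while it tracks a trajectory beginning one planning interval in the future and lasting $\tfin$ is sensed, so $P_{\bt}$ captures all collision-relevant obstacles for the new trajectory.

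Two cases then arise. If Line \ref{lin:trajopt} succeeds, the new $k^* \in \NAF(D) = K_{\T}$; since the robot is not-at-fault at the new trajectory's start $t_{j+1}$, Theorem \ref{thm:not-at-fault_for_all_time} shows this $k^*$ certifies not-at-fault behavior for all $t \geq t_{j+1}$, re-establishing the invariant. If Line \ref{lin:trajopt} fails (the \textbf{Catch} on Line \ref{lin:catch}), $k^*$ is left unchanged; by the invariant its fail-safe braking maneuver is itself not-at-fault for all time, so the invariant again carries over to iteration $j+1$. Induction closes the argument.

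The main obstacle is the receding-horizon time bookkeeping: each iteration resets the trajectory-producing clock to $0$ while physical time advances by $\tau\plan$, and the trajectory computed in iteration $j$ is meant to begin at $t_{j+1}$, not $t_j$. The care lies in verifying (i) that the robot is genuinely not-at-fault at the handoff instant $t_{j+1}$ so that Theorem \ref{thm:not-at-fault_for_all_time} legitimately applies to the new trajectory \emph{at its own starting time}, and (ii) that the $\tau\plan$ offset appearing in the sensor-horizon bound of Theorem \ref{thm:sensor_horizon} is precisely what accounts for the new trajectory starting one planning interval ahead. Once this alignment is made explicit, the fail-safe handled by the \textbf{Catch} branch guarantees the invariant is never broken, even when online optimization fails to return a new plan.
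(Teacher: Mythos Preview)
Your proposal is correct and matches the paper's approach: the paper does not spell out a proof at all, merely stating that the result follows ``by applying Theorem \ref{thm:not-at-fault_for_all_time},'' and your induction over planning iterations with the invariant you describe is exactly the natural way to make that application precise. The time-bookkeeping subtlety you flag (that the sensor-horizon bound in Theorem \ref{thm:sensor_horizon} accounts for the $\tau\plan$ offset before the new trajectory begins) is indeed the one nontrivial point, and you have identified it correctly.
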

\section{Simulation and Hardware Demos}\label{sec:demos}

We demonstrate the proposed RTD-D method on two robot platforms in simulation and on hardware (shown in Figure \ref{fig:ev_and_segway}).

\subsection{Robots}
The first robot is a Segway RMP differential-drive robot with a high-fidelity model given by \cite[Example 7]{kousik2018_RTD_ijrr}.
The control inputs are desired yaw rate $u_1$ and desired speed $u_2$.
We find $\vep = 0.1$ m (as in Definition \ref{def:spatial_est_error}) and $c_1$ and $c_2$ from motion capture data.
The Segway has a circular footprint with radius $0.38$ m.
Mapping and localization are performed with a Hokuyo UTM-30LX lidar and Google Cartographer \cite{google_cartographer}.
RTD-D is run in MATLAB and ROS on a 4.0 GHz laptop.

\begin{figure}[t]
    \centering
    \includegraphics[height=33mm]{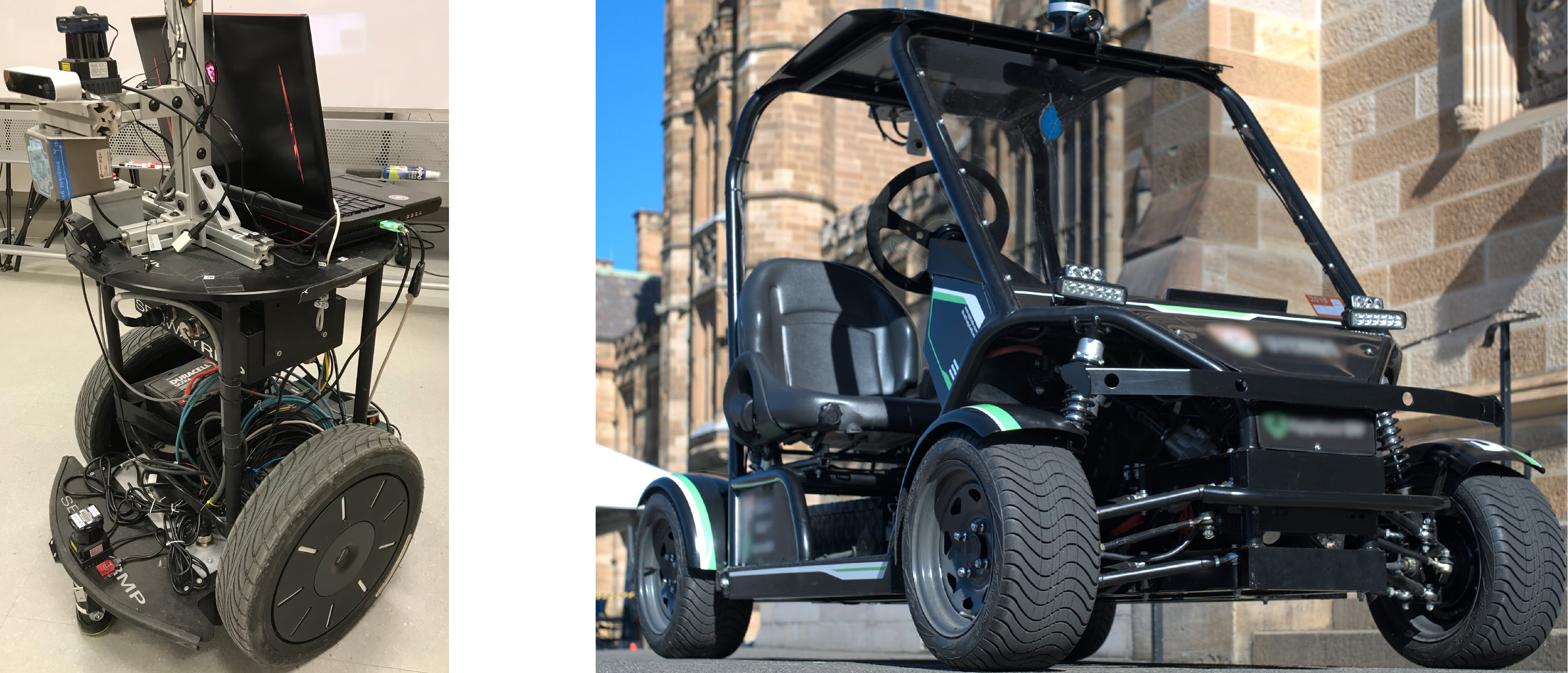}
    \caption{Robots used for simulation and hardware demos.
    The Segway is on the left and the EV on the right.}
    \label{fig:ev_and_segway}
    \vspace*{-0.5cm}
\end{figure}

The second robot is a small electric vehicle called the EV with the following high-fidelity model:
\begin{align}\label{eq:ev_hig_fid_model}
    \begin{bmatrix}\dot{x}_1(t)\\\dot{x}_2(t)\\\dot{\theta}(t)\\\dot{\delta}(t)\\\dot{v}(t)\end{bmatrix} &= \begin{bmatrix} 
        v(t)\cos(\theta(t))-\dot{\theta}(t)(c_1+c_2v(t)^2)\sin(\theta(t))\\
        v(t)\sin(\theta(t))+\dot{\theta}(t)(c_1+c_2v(t)^2)\cos(\theta(t))\\
        \tan(\delta(t))v(t)(c_3+c_4v(t)^2)^{-1} \\
        c_5(\delta(t)-u_1(t)) \\
        c_6+c_7(v(t)-u_2(t))+c_8(v(t)-u_2(t))^2
    \end{bmatrix},
\end{align}
where $\theta$ is heading, $\dl$ is steering angle, and $v$ is speed.
Saturation limits are $|\dl(t)| \leq 0.50$ rad, $|\dot{\dl}(t)| \leq 0.50$ rad/s, and $|\dot{v}(t)|\in\ [-6.86,3.50]$ m/s$^2$.
We find $\vep = 0.1$ m (as in Definition \ref{def:spatial_est_error}) and the coefficients $c_1,\cdots,c_8$ using localization data; the EV performs localization with a Robosense RS-Lidar-32 and saved maps \cite{berrio2018_robust_map}.
The EV has a rectangular $2.4\times 1.3$ m$^2$ footprint.
ROS runs on-board on a 2.6 GHz computer.
RTD-D is run in MATLAB on a 3.1 GHz laptop.
For both systems we implement Line \ref{lin:trajopt} in Algorithm \ref{alg:trajopt} using MATLAB's \texttt{fmincon}.

\subsection{RTD-D Implementation}\label{subsec:RTD_implementation}

Both robots create desired trajectories with:
\begin{align}\label{eq:example_traj-producing_model}
\begin{split}
f\move(t,x,k) &= \begin{bmatrix} k_2 \\ 0 \end{bmatrix} + \om\des(k)\begin{bmatrix}x_2 \\ x_1\end{bmatrix}\\
f\brk(t,x,k) &= s(t,k)\cdot f\move  (t,x,k) \\
f\stp(t,x,k) &= \begin{bmatrix}0\\0\end{bmatrix}.
\end{split}
\end{align}
This model produces circlar arcs that brake to a stop over $\T\brk$.
The trajectory parameters are $k = (k_1,k_2)$.
The desired yaw rate is $\om\des:  K \to \R$, given by $\om\des(k) = k_1$ for the Segway and $\om\des(k) = k_1k_2/\ell$ for the EV, where $\ell$ is the EV's wheelbase in meters.
For both robots, $k_2$ is desired speed.
The time phases of \eqref{eq:traj-prod_model_time_phases} are $\T\plan = [0,\tau\plan]$, $\T\brk = [\tau\plan,\tau\plan+\tau\brk(k)]$, and $\T\stp = [\tau\plan+\tau\brk(k),\tfin]$.
The braking time is $\tau\brk(k) = 1.0$ s for the Segway and $\tau\brk(k) = k_2/3$ for the EV.
We pick $\tfin$ by sampling the braking time for each robot's high-fidelity model.
The function $s: \T\times K \to \R$ is given by
\begin{align}
    s(t,k) = 1-\frac{t-\tau\plan}{\tau\brk(k)},\label{eq:time_slow_down_s_for_braking}
\end{align}
which slows the dynamics to zero over $\T\brk$.

For the Segway, $|k_1| \leq 1.5$ rad/s and $k_2 \in [0,2]$ m/s; between planning iterations, we limit commanded changes in $k_1$ (resp. $k_2$) to $0.5$ rad/s (resp. $0.5$ m/s).
For the EV, $|k_1| \leq 0.5$ rad and $k_2 \in [0,5]$ m/s; we limit commanded changes in $k_1$ (resp. $k_2$) to $0.1$ rad (resp. $0.5$ m/s).
For both robots, $u_k$ generates control inputs $u_{k,1}(t,k) = k_1$ and $u_{k,2}(t,k) = k_2$ $\forall~t\in \T\move$; $u_{k,1}(t,k) = s(t,k)k_1$ and $u_{k,2}(t,k) = s(t,k)k_2$ $\forall~t\in \T\brk$; and $u_{k,1}(t,k) = u_{k,2}(t,k) = 0~\forall~t\in \T\stp$.

To find tracking error functions, we simulate \eqref{eq:high-fidelity_model} under $u_k$ for each robot over a variety of initial conditions and desired trajectories.
We fit $g_{i,j}$ as polynomials satisfying Assumption \ref{ass:tracking_error}.
For each robot we solve \eqref{prog:find_FRS} as described in Section \ref{sec:reachability}, with $(v_i,w_i,q_i)$ as degree 10 polynomials.
For both robots we select the spatial and temporal buffers as $b = b_t = 0.1$ m.

\subsection{Simulation Demonstrations}\label{subsec:simulation_demonstrations}

For the Segway, simulations are in a $20\times10$ m$^2$ world with 1--10 $0.3\times0.3$ m$^2$ box-shaped obstacles.
We ran 100 trials for each number of obstacles (1000 trials total).
In each trial, a random start and goal are chosen approximately 18 m apart.
Each obstacle moves at a random constant speed along a random piecewise-linear path.
Simulations are identical for the EV, but the world is $60\times 10$ m$^2$, and the obstacles are $1\times 1$ m$^2$.
Both planners are restricted to $\tau\plan = 0.5$ s for each planning iteration (i.e., we require them to run in real time).
The spatial state estimation error is $\vep = 0$ for simulation.
At each planning iteration both planners are given a waypoint between the robot's position and the goal; the cost function for both planners is to reduce distance to the waypoint, resulting in optimizing to reach the global goal as fast as possible.

RTD-D is implemented as discussed above.
For comparison, a state lattice (SL) mid-level planner is implemented as in \cite{mcnaughton_thesis_2011} in MATLAB with braking as a fail-safe in each plan and LazySP for searching the lattice graph online \cite{dellin_and_srinivasa2016_lazysp}.
Similar to the approach used by \citet[Section 9.3.1]{kousik2018_RTD_ijrr}, SL was tested with obstacles buffered by increasing amounts until the planner had collisions in less than 10\% (resp. 20\%) of trials for the Segway (resp. EV); the final values were $0.43$ m (resp. $2.77$ m) for the Segway (resp. EV).
Since SL planners require feedback about the pose of the generated trajectories, we use a linear MPC controller for both robots.

Results are summarized in Table \ref{tab:simulation_results}.
Note, RTD-D has no at-fault collisions for either robot.
Collisions occur with SL because the robot cannot perfectly track its reference trajectory, and it is unclear how to buffer obstacles to provably compensate for tracking error and the robot's footprint (a variety of heuristics are presented in \cite{mcnaughton_thesis_2011}).
Compared to the Segway simulations, the EV simulations are more difficult because $v\rel$ is higher, leading to more collisions for SL.
Both planners stop more often than in the Segway simulations.
Note that the EV is not allowed to reverse and cannot turn in place, so it sometimes gets trapped by obstacles.

\begin{table}
\centering
\begin{tabular}{|c|c|r|r|r|r|}
\hline
Robot & Planner & \multicolumn{1}{c|}{AFC} & \multicolumn{1}{c|}{Goals} & \multicolumn{1}{c|}{AS} & \multicolumn{1}{c|}{APS} \\ \hline
 & \cellcolor[HTML]{EFEFEF}RTD-D & \cellcolor[HTML]{EFEFEF}\textbf{0.0} \% & \cellcolor[HTML]{EFEFEF}\textbf{100.0} \% & \cellcolor[HTML]{EFEFEF}1.17 m/s & \cellcolor[HTML]{EFEFEF}1.90 m/s \\ \cline{2-6} 
\multirow{-2}{*}{Segway} & SL & 7.6 \% & 92.4 \% & \textbf{1.37} m/s & \textbf{1.99} m/s \\ \hline
 & \cellcolor[HTML]{EFEFEF}RTD-D & \cellcolor[HTML]{EFEFEF}\textbf{0.0} \% & \cellcolor[HTML]{EFEFEF}\textbf{90.7} \% & \cellcolor[HTML]{EFEFEF}2.18 m/s & \cellcolor[HTML]{EFEFEF} \textbf{4.91 m/s} \\ \cline{2-6} 
\multirow{-2}{*}{EV} & SL & 17.2 \% & 77.3 \% &  \textbf{2.87 m/s} & 4.64 m/s \\ \hline
\end{tabular}
\caption{Simulation results for RTD-D versus a state lattice (SL) planner based on \cite{mcnaughton_thesis_2011}.
The ``AFC'' column is the percentage of trials with At-Fault Collisions as per Definition \ref{def:not-at-fault}.
RTD-D has no such collisions as expected, whereas the SL planner is not able to guarantee not-at-fault operation.
The ``AS'' column is Average Speed across jointly-successful trials (meaning trials in which both RTD-D and SL reached the goal).
Similarly, ``APS'' is Average Peak Speed across jointly-successful trials.
Using AS and APS as a measure of conservatism, we notice that RTD-D typically travels slightly slower than SL, but the tradeoff is worthwhile since RTD-D is always not-at-fault.}
\label{tab:simulation_results}
\vspace*{-0.5cm}
\end{table}

\subsection{Hardware Demonstrations}

To illustrate the capability of RTD-D, we also tested it on the Segway and EV hardware as described above, with videos available at \texttt{\url{www.roahmlab.com/ev\_dyn\_obs\_demo}} and (\texttt{\url{www.roahmlab.com/segway\_dyn\_obs\_demo}}.

The Segway runs indoors at up to 1.5 m/s in similar scenarios as in simulation.
Virtual dynamic obstacles ($v\obsmax = 1$ m/s) are created in MATLAB.
The testing area is smaller than the simulation world, so we only test with up to 3 obstacles.
The room boundaries are handled with RTD as in \cite{kousik2018_RTD_ijrr}.

The EV runs outdoors in a large open area at up to 3 m/s, with a safety driver.
For the EV we test more structured, car-like scenarios and show a variety of overtake maneuvers.
Virtual obstacles ($v\obsmax = 1.5$ m/s) resembling people or cyclists are created in MATLAB.
The area is large enough that we do not consider static obstacles.
\section{Conclusion}\label{sec:conclusion}

This paper introduces Reachability-based Trajectory Design for Dynamic environments (RTD-D), which generates provably not-at-fault, dynamically-feasible reference trajectories.
The contributions of this paper are: a minimum sensor horizon to ensure not-at-fault planning; a method for computing an FRS of a robot with tracking error and fail-safe maneuvers; an obstacle representation to guarantee choosing not-at-fault trajectories in real time; and successful simulation and hardware demonstrations of RTD-D.
For future work, we will extend this work to 3D systems and incorporate different types of uncertainty, such as varying road friction.

\renewcommand{\bibfont}{\normalfont\small}
{\renewcommand{\markboth}[2]{}
\printbibliography}

\end{document}